\newtheorem{lemma}{Lemma}
\newtheorem{property}{Property}
\newtheorem{example}{Example}
\newtheorem{remark}{Remark}
\theoremstyle{definition}
\newtheorem{defn}{Definition}
\DeclareMathOperator{\rank}{rank}
\DeclareMathOperator{\tr}{trace}
\DeclareMathOperator{\diag}{diag}
\DeclareMathOperator{\argmax}{argmax}
\begin{document}
\title{Generalized Separable Nonnegative Matrix Factorization}
\date{}
\author{
Junjun Pan \qquad  Nicolas Gillis\thanks{
Emails: \{junjun.pan, nicolas.gillis\}@umons.ac.be.
This work was supported by the European Research Council (ERC starting grant no 679515), and 	
				the Fonds de la Recherche Scientifique - FNRS and the Fonds Wetenschappelijk Onderzoek - Vlanderen (FWO) under EOS Project no O005318F-RG47. } \\
Department of Mathematics and Operational Research \\
Facult\'e Polytechnique, Universit\'e de Mons \\
Rue de Houdain 9, 7000 Mons, Belgium
}

\maketitle
\begin{abstract}
Nonnegative matrix factorization (NMF) is a linear dimensionality technique for nonnegative data with applications such as image analysis, text mining, audio source separation and hyperspectral unmixing. Given a data matrix $M$ and a factorization rank $r$, NMF looks for a nonnegative matrix $W$ with $r$ columns and a nonnegative matrix $H$ with $r$ rows such that $M \approx WH$. NMF is NP-hard to solve in general. However, it can be computed efficiently under the separability assumption which requires that the basis vectors appear as data points, that is, that there exists an index set $\mathcal{K}$ such that $W = M(:,\mathcal{K})$. In this paper, we generalize the separability assumption: We only require that for each rank-one factor $W(:,k)H(k,:)$ for $k=1,2,\dots,r$, either $W(:,k) = M(:,j)$ for some $j$ or $H(k,:) = M(i,:)$ for some $i$. We refer to the corresponding problem as generalized separable NMF (GS-NMF). We discuss some properties of GS-NMF and propose a convex optimization model which we solve using a fast gradient method. We also propose a heuristic algorithm inspired by the successive projection algorithm.  To verify the effectiveness of our methods, we compare them with several state-of-the-art separable NMF algorithms on synthetic, document and image data sets.
\end{abstract}

\textbf{Keywords.}
nonnegative matrix factorization,
separability,
algorithms

\section{Introduction}
Given a nonnegative matrix $M \in \mathbb{R}^{m\times n}_+$ and an integer factorization rank $r$,
nonnegative matrix factorization (NMF) is the problem of computing
$W \in \mathbb{R}^{m\times r}_+$ and
$H \in \mathbb{R}^{r\times n}_+$
such that $M \approx WH$.
Typically, the columns of the input matrix $M$ correspond to data points (such as images of pixel intensities or documents of word counts) and NMF allows to perform linear dimensionality reduction.
In fact, we have $M(:,j) \approx \sum_{k=1}^r W(:,k) H(k,j)$ for all $j$, where $M(:,j)$ denotes the $j$th column of $M$.
This means that the data points are approximated by points within an $r$-dimensional subspace spanned by the columns of $W$. The nonnegativity constraints lead to easily interpretable factors with applications such as image processing, text mining, hyperspectral unmixing and audio source separation; see for example the recent survey~\cite{xiao2019uniq} and the references therein.

NMF is NP-hard in general~\cite{vavasis2009complexity} and its solution is in most cases not unique; see~\cite{xiao2019uniq} and the references therein.
These two issues motivated the introduction of the separability assumption as a way to solve NMF efficiently and have unique solutions.
A matrix $M\in \mathbb{R}^{m\times n}$ is $r$-separable if there exist an index set $\mathcal{K}$ of cardinality $r$ and a nonnegative matrix $H$ such that $M=M(:,\mathcal{K})H$,  where $M(:,\mathcal{K})$ is the matrix containing the columns of $M$ with index in $\mathcal{K}$.
This means that there exists an NMF $(W,H)$ such that each column of $W$ is equal to a column of $M$.
Given a matrix $M$ that satisfies the separability condition,
computing $W=M(:,\mathcal{K})$ and $H$ can be done efficiently; see for example~\cite{ma2014signal, gillis2014and} and the references therein.  The corresponding problem is referred to as separable NMF.

Let us present an equivalent definition of separability that will be particularly useful in this paper. It was originally proposed in~\cite{esser2012convex, recht2012factoring, elhamifar2012see} in order to design convex formulations for separable NMF.
The matrix $M$ is $r$-separable if there exist some permutation matrix $\Pi\in \{0,1\}^{n\times n}$ and a nonnegative matrix $H'\in \mathbb{R}^{r\times (n-r)}_+$ such that
\[
M\Pi=M\Pi\left(
           \begin{array}{cc}
             I_r  & H'  \\
              0_{n-r,r} & 0_{n-r,n-r}  \\
           \end{array}
         \right),
\]
where $I_r$ is the $r$-by-$r$ identity matrix and $0_{r,p}$ is the matrix of all zeros of dimension $r$ by $p$.
In fact, under an appropriate permutation, the first $r$ columns of $M$ correspond to the columns of $W$ while the last $n-r$ columns are convex combinations of these first $r$ columns. Equivalently, we have
\begin{equation} \label{eq:convsep}
M \; = \; M \; \underbrace{ \Pi \left(
           \begin{array}{cc}
             I_r  & H'  \\
              0_{n-r,r} & 0_{n-r,n-r}  \\
           \end{array}
         \right) \Pi^{T} }_{X \in \mathbb{R}^{n \times n}}.
\end{equation}
Convex formulations were obtained by trying to find a matrix $X$ such that (i)~$M \approx MX$ and (ii)~$X$ has as many zero rows as possible~\cite{esser2012convex, recht2012factoring, elhamifar2012see, gillis2014robust}; see Section~\ref{sec:optim} for more details.

Note that every $m$-by-$n$ nonnegative matrix is $n$-separable since $M = MI_n$ hence it is important to find the minimal $r$. Geometrically, in noiseless conditions, the minimal $r$ is the number of extreme rays of the cone generated by the columns of $M$.

Under the separability assumption, NMF can be solved in polynomial time. This has been known and used for a long time in the hyperspectral imaging and signal processing communities~\cite{ren2003automatic,nascimento2005vertex, chan2008convex, chan2011simplex};see also~\cite{ma2014signal} and the references therein. Furthermore, separable NMF can still be solved in polynomial time in the presence of noise~\cite{arora2012computing, arora2016computing}, and many robust algorithms have been proposed recently \cite{recht2012factoring, arora2013practical, gillis2013robustness, gillis2014fast, gillis2014successive}.
The separability assumption makes sense in several practical situations.
In hyperspectral unmixing, each column of the data matrix is the spectral signature of a pixel. Separability requires that for each material wtihin the hyperspectral image, there exists a pixel that contains only that material; see for example~\cite{ma2014signal}.
In audio source separation, the input matrix is the time  frequency amplitude spectrogram~\cite{fevotte2009nonnegative}. Separability requires that for each source, there exists a moment in time when only that source is active (or, considering the matrix transpose, separability requires that, for each source, there is a frequency for which only that source is active).

In document classification, each entry $M(i,j)$ of matrix $M$ indicates the importance of word $i$ in document $j$ (e.g., the number of occurrences of word~$i$ in document~$j$). 
Separability of $M$ (that is, each column of $W$ appears as a column of $M$) requires that, for each topic, there exists at least one document only discuss that topic (a ``pure'' document).
Separability of $M^T$ (that is, each row of $H$ appears as a row of $M$) requires that, for each topic,  there exists at least one word used only in that topic\cite{arora2012learning} (a ``pure'' word, referred to as an anchor word).

In this paper, we generalize the separability assumption as follows.
\begin{defn} \label{def:gsmat1}
A matrix $M\in \mathbb{R}_+^{m\times n}$
is $(r_1,r_2)$-separable if there exist
    an index set $\mathcal{K}_1$ of cardinality $r_1$
    and
    an index set $\mathcal{K}_2$ of cardinality $r_2$, and nonnegative matrices $P_1\in \mathbb{R}_+^{r_1\times n}$ and $P_2\in \mathbb{R}_+^{m\times r_2}$ such that
\begin{equation}\label{Gsep}
 M=M(:,\mathcal{K}_1)P_1+P_2M(\mathcal{K}_2,:),
\end{equation}
where $P_1(:,\mathcal{K}_1) = I_{r_1}$
and
$P_1(\mathcal{K}_2,:) = I_{r_2}$.
\end{defn}
We will refer to such matrices as generalized separable (GS) matrices, with their corresponding GS decomposition~\eqref{Gsep}.

The $(r_1,r_2)$-separability is a natural extension of  $r$-separability since a matrix is $r$-separable if and only if it is $(r,0)$-separable.
Note that every $m$-by-$n$ nonnegative matrix $M$ is $(n,0)$- and $(0,m)$-separable.

Generalized separability makes sense in several applications.
In document classification, it requires that for each topic, there exists either
\begin{itemize}
  \item a document discussing only that topic (a ``pure''  document), \emph{or}
  \item a word used only by that topic (an anchor word).
\end{itemize}

This is a much more relaxed condition than separability which needs a ``pure''  document for each topic, or an anchor word for each topic when considering the matrix transpose.

In a GS decomposition, the $r_1$ columns of $M$ indexed by $\mathcal{K}_1$ are $r_1$ pure documents, and the
$r_2$ rows of $M$ indexed by $\mathcal{K}_2$ are $r_2$ anchor words, for a total of $r = r_1 + r_2$ topics.
The conditions $P_1(:,\mathcal{K}_1) = I_{r_1}$
and
$P_1(\mathcal{K}_2,:) = I_{r_2}$ are rather natural and mean that the pure documents and anchor words are represented by themselves.

In terms of audio source separation~\cite{fevotte2009nonnegative}, generalized separability requires that for each source there exists either
\begin{itemize}
  \item a moment in time where only that source is active, or
  \item a frequency where only that source has a positive signature.
\end{itemize}

Again, this condition is much more relaxed than separability. 

 \subsection{Related problems}
 GS-NMF is related to the CUR decomposition and the
 pseudo-skeleton approximation.
 Given a matrix $M$, these techniques try to identify a subset of columns $\mathcal{K}_1$ and rows $\mathcal{K}_2$ of $M$ such that
$||M -  M(:,\mathcal{K}_1) U M(\mathcal{K}_2,:)||_F$
 is as small as possible.
 In the CUR decomposition, $U$ is chosen so as to minimize the approximation error, that is, $U = M(:,\mathcal{K}_1)^{\dagger} M M(\mathcal{K}_2,:)^{\dagger}$, where $A^{\dagger}$ denotes a Moore-Penrose generalized inverse of the matrix $A$~\cite{mahoney2009cur}.
 In the skeleton approximation, $U = M(\mathcal{K}_2,\mathcal{K}_1)^{-1}$~\cite{goreinov1997pseudo}. We refer the reader to~\cite{mikhalev2018rectangular} and the references therein for more information on these models.
Since these models do not take nonnegativity into account, their analysis is rather different than GS-NMF.
For example, to obtain exact decompositions of a rank-$r$ matrix,
they can pick any subset of $r$ linearly independent rows and columns; this is not true for GS-NMF.


GS-NMF is also related to a model introduced in~\cite{liu2011latent} and referred to as latent low-rank representation (LatLRR). The goal of LatLRR is to use the representation $M \approx MX + YM$ where both $X$ and $Y$ have low-rank. Intuitively, $M$ is represented using a subspace of the column space of $M$ (namely, $MX$) and a subspace of the row space (namely, $YM$).
To achieve this goal, Liu and Yan~\cite{liu2011latent}  minimize the nuclear norms of $X$ and $Y$ (since minimizing the rank is hard in general) and apply their model on facial images.  GS-NMF clearly shares some similarity with LatLRR. In fact, our convex model introduced in Section~\ref{sec:optim} will also use the representation $M = MX + YM$ but the constraints on $X$ and $Y$ will be rather different.
Moreover, GS-NMF takes nonnegativity into account, and it is more interpretable as the basis used to reconstruct $M$ cannot be any linear combinations of the columns/rows of $M$ as in LatLRR, but need to be a subset of these columns/rows.
For example, when applied on facial images (see Section~\ref{sec:images} for numerical experiments), our model will identify important pixels and images within a data set (meaning that they can be used to approximate well all other images)
while LatLRR identifies important linear combinations of pixels and images which is more difficult to interpret.

 \subsection{Outline and contribution of the paper}

 In this paper, we consider the NMF problem under the generalized separability condition, referred to as generalized separable NMF (GS-NMF).

 In Section~\ref{sec:prop}, we provide an equivalent characterization of GS matrices, similarly as done for separable matrices in~\eqref{eq:convsep}.
 This leads to an idealized model to tackle GS-NMF.
 We then present several properties of GS matrices. We present a class of $m$-by-$n$ matrices which are not $(n-1,0)$- nor $(0,m-1)$-separable but that are $(3,3)$-separable.
 This illustrates the fact that GS decompositions can be much more compact than separable ones, requiring much fewer rank-one factors to reconstruct the input matrix.
 We also discuss non-uniqueness issues of GS-NMF, a problem which is not present for separable NMF.
 In Section~\ref{sec:optim}, we propose a convex optimization model to tackle GS-NMF. It is the generalization of the models proposed in~\cite{recht2012factoring, gillis2014robust} for separable NMF.     Then, we implement a fast gradient method to takle this model that will allow us to tackle GS-NMF,
 similarly as done in~\cite{gillis2018fast} for separable NMF.
 Unfortunately, this model requires the use of $n^2+m^2$ variables hence is computationally rather expensive and does not scale well for large data sets. 
    In Section~\ref{sec:heur}, we propose a heuristic algorithm inspired by the successive projection algorithm (SPA)~\cite{araujo2001successive, gillis2014fast} which we refer to as the generalized successive projection algorithm (GSPA) and that requires $O(mnr)$ operations as for most NMF algorithms.

   In Section~\ref{sec:numexp}, we perform extensive numerical experiments on synthetic, document and image data sets. 
   In most cases, we will observe that GS-NMF algorithms are able to compute decompositions with the same number of rank-one factors but with a lower approximation error than separable NMF algorithms.

\section{Properties of GS Matrices} \label{sec:prop}

Let us first show a simple property.
\begin{property}[Pattern of zeros] \label{prop:pattern}
    Let $M \in \mathbb{R}^{m\times n}_+$
be $(r_1,r_2)$-separable as described in Definition~\ref{def:gsmat1} so that $M=M(:,\mathcal{K}_1)P_1+P_2M(\mathcal{K}_2,:)$.
Then $M(\mathcal{K}_2, \mathcal{K}_1)
= 0_{r_2,r_1}$.
\end{property}
\begin{proof}
According to~\eqref{Gsep}, we have
\begin{align*}
M(\mathcal{K}_2, \mathcal{K}_1)
& =
M(\mathcal{K}_2, \mathcal{K}_1) P_1(:,\mathcal{K}_1),
+ P_2 (\mathcal{K}_2,:) M(\mathcal{K}_2,\mathcal{K}_1) \\
& =
M(\mathcal{K}_2, \mathcal{K}_1) + M(\mathcal{K}_2,\mathcal{K}_1),
\end{align*}
since, by definition, $P_1(:,\mathcal{K}_1) = I_{r_1}$ and $P_2(\mathcal{K}_2,:) = I_{r_2}$.
This implies that $M(\mathcal{K}_2,\mathcal{K}_1) = 0$.
\end{proof}

Intuitively,
in terms of topic modeling for example,
Property~\ref{prop:pattern} means that a pure document about a topic cannot contain an anchor word from another topic. \\

Let us provide two equivalent characterization of GS matrices.

\begin{property}[Equivalent characterization 1] \label{prop:eqiv1}
A matrix $M\in \mathbb{R}^{m\times n}_+$
is $(r_1,r_2)$-separable if and only if it can be written as
\begin{equation} \label{M}
 M=\Pi_r\left(
              \begin{array}{cc}
                 W_1 & W_1H_1+W_2H_2 \\
                 0_{r_2,r_1} & H_2 \\
              \end{array}
            \right)\Pi_c,
 \end{equation}
 for some permutations matrices $\Pi_c \in \{0,1\}^{n\times n}$ and $\Pi_r \in \{0,1\}^{m\times m}$, and for some nonnegative matrices
  $W_1 \in \mathbb{R}^{(m-r_2) \times r_1}_+$,
 $H_1 \in \mathbb{R}^{r_1\times (n-r_1)}_+$,
  $W_2 \in \mathbb{R}^{(m-r_2)\times r_2}_+$ and
  $H_2 \in \mathbb{R}^{r_2 \times (n-r_1)}_+$.
 \end{property}
 \begin{proof}
 This follows directly from Property~\ref{prop:pattern} and Definition~\ref{def:gsmat1}.
 The permutation $\Pi_c$ is chosen such that it moves the columns of $M$ corresponding to $\mathcal{K}_1$ in the first $r_1$ positions,
and the permutation $\Pi_r$ is chosen such that it moves the rows of $M$ corresponding to $\mathcal{K}_2$ in the last $r_2$ positions.
After these permutations,
there is $r_2$-by-$r_1$ block of zeros at the bottom left of $\Pi_r^T M \Pi_c^T$ since $M(\mathcal{K}_2,\mathcal{K}_1)=0$ (Property~\ref{prop:pattern}).
Moreover, since $M
= M(:,\mathcal{K}_1)P_1 + P_2 M(\mathcal{K}_2,:)$ for some nonnegative matrices $P_1$ and $P_2$,
we can take $W_1, H_1, W_2$ and $H_2$ such that
$M(:,\mathcal{K}_1) = [W_1; 0_{r_2,r_1}]$,
$P_1 = [I_{r_1} \, H_1] \Pi_c$,
$M(\mathcal{K}_2,:) = [0_{r_2,r_1} \, H_2]$,
and
$P_2 = \Pi_r [W_2; I_{r_2}]$.
  \end{proof}

As explained in the introduction, a matrix $M$ is $r$-separable if and only if it can be written as $M = MX$ where $X$ has $r$ non-zero rows. A similar characterization is possible for GS matrices.

 \begin{property}[Equivalent characterization 2] \label{prop:eqiv2}
A matrix $M\in \mathbb{R}^{m\times n}_+$
is $(r_1,r_2)$-separable if and only if it can be written as
$$
M=MX+YM,
$$
where
\begin{align}
X& =\Pi^{T}_c\left(
             \begin{array}{cc}
               I_{r_1} &H_1 \\
               0_{n-r_1,r_1} &0_{n-r_1,n-r_1} \\
             \end{array}
           \right)\Pi_c, \nonumber \\
           Y& = \Pi_r\left(
             \begin{array}{cc}
               0_{m-r_2,m-r_2}& W_2 \\
               0_{r_2,m-r_2} &  I_{r_2} \\
             \end{array}
           \right)\Pi^{T}_r, \label{eq:XY}
\end{align}
for some permutations matrices $\Pi_c \in \{0,1\}^{n\times n}$ and $\Pi_r \in \{0,1\}^{m\times m}$, and for some $H_1 \in \mathbb{R}^{r_1\times (n-r_1)}_+$ and $W_2 \in \mathbb{R}^{(m-r_2)\times r_2}_+$.
\end{property}
\begin{proof}
By Property~\ref{prop:eqiv1}, the matrix $M$ is $(r_1,r_2)$-separable if and only if there exist some permutation matrices $\Pi_c \in \{0,1\}^{n\times n}$ and $\Pi_r \in \{0,1\}^{m\times m}$ such that
\begin{equation}
 M=\Pi_r\left(
              \begin{array}{cc}
                 W_1 & W_1H_1+W_2H_2 \\
                 0_{r_2,r_1} & H_2 \\
              \end{array}
            \right)\Pi_c,
 \end{equation}
 for some $W_1 \in \mathbb{R}^{(m-r_2)\times r_1}_+$, $H_1 \in \mathbb{R}^{r_1\times (n-r_1)}_+$, $W_2 \in \mathbb{R}^{(m-r_2)\times r_2}_+$, $H_2 \in \mathbb{R}^{r_2\times (n-r_1)}_+$.
 Letting $\tilde{M}=\Pi^T_rM\Pi^T_c$ hence $M=\Pi_r\tilde{M}\Pi_c$, we have that $\tilde{M}$ is equal to
 $$
  \tilde{M}\left(
             \begin{array}{cc}
               I_{r_1} &H_1 \\
               0_{n-r_1,r_1} &0_{n-r_1,n-r_1} \\
             \end{array}
           \right)+\left(
             \begin{array}{cc}
               0_{m-r_2,m-r_2}& W_2 \\
               0_{r_2,m-r_2} &  I_{r_2} \\
             \end{array}
           \right) \tilde{M}.
$$
\end{proof}

In practice, given a GS matrix,
it is important to decompose it as a $(r_1,r_2)$-separable matrix with minimal value for $r_1+r_2$ since this compresses the data the most.
In the following, minimal $(r_1,r_2)$-separable matrices are defined.
 \begin{defn}
 A matrix $M$ is a minimal $(r_1,r_2)$-separable if $M$ is $(r_1,r_2)$-separable and $M$ is not $(r'_1,r'_2)$-separable for any $r_1'+r_2'< r_1+r_2$.
 \end{defn}

 By property~\ref{prop:eqiv2}, finding minimal GS decompositions is equivalent to finding  $X$ and $Y$ that satisfy~\eqref{eq:XY} and such that the number of non-zero rows of $X$ and non-zero columns of $Y$ is minimized.

\begin{property}[Idealized model] Let $M$ be minimal $(r_1,r_2)$-separable, and let $(X^*,Y^*)$ be an optimal solution of
\begin{align}
  \min_{X\in \mathbb{R}^{n\times n}_+,Y\in \mathbb{R}^{m\times m}_+} & \|X\|_{row,0}+\|Y\|_{col,0} \nonumber \\
 &  \text{ such that } \; M=MX+YM, \label{eq:combinmodel}
 \end{align}
where $ \|X\|_{row,0}$ is equal to the number of nonzero rows of $X$ and $\|Y\|_{col,0}$ is equal to the number of nonzero columns of $Y$.
Let also $\mathcal{K}_1$ correspond to the indices of the non-zero rows of $X^*$ and
$\mathcal{K}_2$ to the indices of the non-zero columns of $Y^*$. If $\rank(M) = r_1+r_2$, we have
\[
\|X^*\|_{row,0}+\|Y^*\|_{col,0}
= |\mathcal{K}_1| + |\mathcal{K}_2|
= r_1+r_2.
\]
\end{property}
\begin{proof}
By Property~\ref{prop:eqiv2}, an $(r_1,r_2)$-separable matrix can be written as $M=MX+YM$ where the number of non-zero rows of $X$ and non-zero columns of $Y$ is $r_1+r_2$.
Hence, by optimality of
$(X^*,Y^*)$, we have
$|\mathcal{K}_1| + |\mathcal{K}_2|
\leq r_1+r_2$.

Moreover, since $\rank(M) = r_1+r_2$, we must have $|\mathcal{K}_1| + |\mathcal{K}_2|
\geq r_1+r_2$.
\end{proof}

Some remarks are in order:
\begin{itemize}

\item As opposed to separable NMF, due to the non-uniqueness of GS-NMF, $|\mathcal{K}_1|$ is not necessarily equal to $r_1$ and $|\mathcal{K}_2|$ to $r_2$; see Property~\ref{prop:constrnonuniq} below.

\item Unfortunately, solving~\eqref{eq:combinmodel} does not guarantee $X$ and $Y$ to have the form~\eqref{eq:XY} where $X$ and $Y$ contain the identify matrix as a submatrix.
This is why we need the condition $\rank(M) = r_1+r_2$.

\item Of course, \eqref{eq:combinmodel} is a difficult combinatorial problem. We will consider in Section~\ref{sec:optim} a convex relaxation.
Before doing that, we first present several other interesting properties of GS matrices.

\end{itemize}


From a practical point of view, GS matrices will be particularly interesting when they allow to compress the data significantly more than separable matrices. In other words, it would be interesting to know whether there exists $(r_1,r_2)$-separable matrices which are not $(r,0)$- nor $(0,r)$-separable for $r \gg r_1+r_2$.
In fact, this is the case; see Property~\ref{prop:comp} for the case $r_1=r_2=3$ and $r = \min(m-1,n-1)$.
First let us show the following lemma.

\begin{lemma} \label{lem:21sep}
There exist 3-by-$n$ matrices
that are $(2,1)$-separable but not $(n-1,0)$-separable.
\end{lemma}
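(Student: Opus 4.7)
The plan is to exhibit an explicit family of $3$-by-$n$ matrices. I take the first two columns of $M$ to be the canonical basis vectors $e_1 = (1, 0, 0)^T$ and $e_2 = (0, 1, 0)^T$, and for $j = 3, \ldots, n$ set the $j$th column to $c_j = (x_j, y_j, 1)^T$, where the planar points $(x_j, y_j)$ lie on a strictly convex, strictly decreasing positive curve; a concrete choice is $x_j = j-2$, $y_j = 1/(j-2)$, which lie on $y = 1/x$.

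To verify $(2, 1)$-separability with $\mathcal{K}_1 = \{1, 2\}$ and $\mathcal{K}_2 = \{3\}$, I pick $P_2 = (0, 0, 1)^T$, so that $P_2 M(\mathcal{K}_2, :)$ reproduces row $3$ of $M$ and vanishes on rows $1$ and $2$. I then read $P_1 \in \mathbb{R}_+^{2 \times n}$ directly off the first two rows of $M$, so that $M(:, \mathcal{K}_1) P_1$ matches $M$ on rows $1$ and $2$ and vanishes on row $3$. This produces nonnegative $P_1, P_2$ with $P_1(:, \mathcal{K}_1) = I_2$ and $P_2(\mathcal{K}_2, :) = I_1$, fulfilling Definition~\ref{def:gsmat1}.

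The substance of the proof is showing that $M$ is not $(n-1, 0)$-separable, which reduces to showing no column of $M$ lies in the conical hull of the remaining $n - 1$ columns. The columns $e_1, e_2$ are extreme rays by a direct third-coordinate argument: any conical combination of the remaining columns equal to $e_1$ or $e_2$ must assign zero weight to every $c_k$ with $k \geq 3$, leaving a multiple of a single other basis vector, which cannot equal $e_1$ or $e_2$. The main case is $c_j$ with $j \geq 3$: if $c_j = \lambda_1 e_1 + \lambda_2 e_2 + \sum_{k \geq 3, k \neq j} \lambda_k c_k$ with $\lambda \geq 0$, the third coordinate forces $\sum_{k \geq 3, k \neq j} \lambda_k = 1$ and the first two demand a convex combination $(\bar x, \bar y)$ of $\{(x_k, y_k) : k \geq 3, k \neq j\}$ satisfying $\bar x \leq x_j$ and $\bar y \leq y_j$. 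I would rule this out using strict convexity and strict monotonicity: for an interior index $j$, strict convexity places $(x_j, y_j)$ strictly below the chord joining its two nearest neighbors at $x = x_j$, and strict monotonicity ensures that for $x \leq x_j$ the lower-left boundary of the convex hull of the remaining planar points lies strictly above $y_j$; the two extreme indices (smallest or largest $x_j$) are handled by strict monotonicity alone.

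I expect the only real obstacle to be packaging the convex-curve geometric fact cleanly, which a brief case split between interior and boundary indices handles, producing the required matrix for every $n \geq 3$.
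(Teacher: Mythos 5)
Your construction is correct and follows essentially the same route as the paper's: first two columns equal to $e_1,e_2$, the third row as the single anchor row (with $P_2=(0,0,1)^T$ and $P_1$ read off the top two rows of $M$), and the remaining columns placed on a strictly convex, decreasing curve so that every column is an extreme ray of the cone generated by the columns. The only differences are cosmetic --- the paper normalizes all columns to the unit simplex (so conic combinations are automatically convex) and uses the parabola $y=2(\tfrac12-x)^2$, whereas you fix the third coordinate to $1$ and use $y=1/x$, which obliges you to exclude coordinatewise-dominated convex combinations rather than exact ones; your convexity-plus-monotonicity argument handles that correctly.
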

\begin{proof}
 Consider the 3-by-$n$ matrix
 \begin{equation} \label{Mn21sep}
   M_{n} = \left(
       \begin{array}{ccccc}
         1 & 0 & \frac{1}{2} & 0 & x^T \\
        0 & 1 &0 & \frac{1}{2} & y^T \\
         0 & 0 & \frac{1}{2} & \frac{1}{2} & z^T \\
       \end{array}
     \right)
 \end{equation}
 where $x, y, z \in \mathbb{R}^{n-4}$
are such that $(x_i,y_i,z_i)$ for $i=1,2,\dots,n-4$ are defined as $0<x_i<\frac{1}{2}$ and $x_i\neq x_j$ for all $i\neq j$,
$y_i=2(\frac{1}{2}-x_i)^2$, $z_i=1-x_i-y_i$.
The points $(x_i,y_i,z_i)$ are distinct points on a curve on the unit simplex hence such points cannot be written as conic combinations of any other points on that curve. In fact, since the entries of the vectors $(x_i,y_i,z_i)$
 sum to one for $i=1,2,\dots,n-4$,
 the weights in such a conic combination would also have to sum to one hence such a conic combination would actually be a convex combination. Clearly, distinct points on the circle $(x_i,y_i)$'s are not convex combination of one another; in other words, every such point is a vertex of their convex hull.

Note also that the third and fourth column of $M_{n}$ are the two extreme points of that curve. The first column of $M_{n}$ also cannot be written as a convex combination of all the other columns since $z_i \neq 0$ for all $i$. This implies that $M_{n}$ is not $(n-1, 0)$ separable: every column of $M_{n}$ is an extreme ray of the cone spanned by the columns of $M_{n}$.
Moreover, $M_{n}$ is $(2,1)$-separable since $M_{n}(1:2,1:2) = I_r$ while the third row can be approximated by itself: we have
\[
M_{n}=M_{n}(:,1:2)P_1 + P_2 M_{n}(3,:),
\]
where $P_1=M_{n}(1:2,:)$, $P_2=(0,0,1)^T$.
\end{proof}


\begin{property}[Compression] \label{prop:comp}
There exist $m$-by-$n$ matrices that are $(3,3)$-separable but not $(n-1,0)$- nor $(0,m-1)$-separable.
\end{property}
\begin{proof}
Let $M_{n}$ be a 3-by-$n$ matrix and $M_{m}$ be a 3-by-$m$ matrix constructed as in~\eqref{Mn21sep}. Let us also construct the $(m+3)$-by-$(n+3)$ matrix
 \begin{equation} \label{eq:mat33}
   M = \left(
       \begin{array}{cc}
        0_{3,3}          & M_{n} \\
        M_{m}^T & 0_{m,n}
       \end{array}
     \right).
 \end{equation}
By Lemma~\ref{lem:21sep},
$M$ is (3,3)-separable (note that the corresponding GS decomposition is not unique since $M_{n}$ is (2,1)- and (0,3)-separable),
while not being ($n+2$,0)-separable nor (0,$m+2$)-separable. In fact, assume $M$ is (0,$m+2$)-separable.
Observe that any row that would be selected from the first $3$ rows (resp.\@ last $m$ rows) of $M$ cannot be used to reconstruct any of the last $m$ rows (resp.\@ first $3$ rows) of $M$ using a positive weight because of the zeros in the last positions (resp.\@ in the first positions). Hence a (0,$m+2$)-separable decomposition of $M$ would imply that either $M_{m}^T$ is (0,$m-1$)-separable, a contradiction with Lemma~\ref{lem:21sep}, or that $M_{n}$ is (0,2)-separable which is not possible since $\rank(M_{n}) = 3$.
The same observation holds for the columns, by symmetry of the problem.
\end{proof}

The next property is rather straightforward but we state here for completeness. It shows that generalized separability is invariant to scaling.

\begin{property}[Scaling] \label{prop:scal}
The matrix $M$ is $(r_1,r_2)$-separable if and only if $D_1MD_2$
is $(r_1,r_2)$-separable for any diagonal matrices $D_1$ and $D_2$ whose diagonal elements are positive.
\end{property}
\begin{proof}
Let $M$ be $(r_1,r_2)$-separable with $M = M(:,\mathcal{K}_1) P_1
+ P_2 M(\mathcal{K}_2)$ with $|\mathcal{K}_1| = r_1$ and $|\mathcal{K}_2| = r_2$.
Multiplying on both sides by $D_1$ and $D_2$, we obtain
\[
D_1M D_2
= D_1M(:,\mathcal{K}_1) P_1 D_2
+ D_1 P_2 M(\mathcal{K}_2,:) D_2.
\]
Denoting $\tilde{M} = D_1 M D_2$,
$\tilde{P}_1 =
D_2(\mathcal{K}_1,\mathcal{K}_1)^{-1} P_1 D_2$ and
$\tilde{P}_2 =
D_1 P_2 D_1(\mathcal{K}_2,\mathcal{K}_2)^{-1}$, we have
\[
\tilde{M} = \tilde{M}(:,\mathcal{K}_1) \tilde{P}_1
+ \tilde{P}_2 \tilde{M}(\mathcal{K}_2,:),
\]
where
$\tilde{P}_1(\mathcal{K}_1,\mathcal{K}_1) = I_{r_1}$ and
$\tilde{P}_2(\mathcal{K}_2,\mathcal{K}_2) = I_{r_2}$; hence $\tilde{M}$ is $(r_1,r_2)$-separable.
The proof in the other direction is the same since $\tilde{M} = D_1 M D_2$ is the diagonal scaling of $M$ using the inverses of $D_1$ and $D_2$.
\end{proof}


\subsection{Unicity of GS decompositions} \label{sec:uniq}

As opposed to separable NMF, GS-NMF does not necessarily admit a unique solution (up to scalings and permutations of the rank-one factors).
In other words, for a minimal $(r_1,r_2)$-separable matrix $M$, the way of picking the rows and columns of $M$ is not necessarily unique:  it may also be $(r_3,r_4)$-separable with  $r_3+r_4=r_1+r_2$ where $r_1\neq r_3$ and $r_2\neq r_4$, or it can be $(r_1,r_2)$-separable with different selection of rows and columns; this is the case for example for the matrix $M$ in~\eqref{eq:mat33}.

The simplest cases are for rank-one and rank-two matrices.
 \begin{property}[Rank-one matrices] \label{prop:nonuniqrank1}
 Any nonnegative rank-one matrix is (1,0)- and (0,1)-separable.
 \end{property}
 \begin{proof}
 This follows directly from the fact that all rows (resp.\@ columns) of a rank-one matrix are multiple of one another.
 \end{proof}

 \begin{property}[Rank-two matrices] \label{prop:nonuniqrank2}
 Any nonnegative rank-two matrix is (2,0)- and (0,2)-separable.
 \end{property}
 \begin{proof}
 This follows from the fact that any nonnegative rank-two matrix is 2-separable~\cite{Tho74}.
 The reason is that a two-dimensional cone is always spanned by its two extreme rays.
 \end{proof}

Examples can be constructed for any values of $(r_1,r_2)$.

 \begin{property}[Construction of non-unique minimal $(r_1,r_2)$-separable matrices] \label{prop:constrnonuniq}
For any $(r_1,r_2)$, we can construct  minimal
$(r_1,r_2)$-separable matrices such that
they are also minimal $(r_3,r_4)$-separable with $r_1+r_2 = r_3+r_4$, $r_3 \neq r_1$ and $r_4 \neq r_2$.
  \end{property}
\begin{proof}
Let  $r_1>r_3$, $r_2<r_4$ and $r_1+r_2 = r_3+r_4$.
Let also $M_{11}\in \mathbb{R}^{(m-r_4)\times  r_3}_+$,
$M_{22}\in \mathbb{R}^{(r_4-r_2)\times  (r_1-r_3)}_+$ and  $M_{33}\in \mathbb{R}^{r_2\times (n-r_1)}_+$ be any nonnegative matrices.
Let us construct $M$ as follows:
 $$
M=\left(
             \begin{array}{ccc}
               M_{11} & 0_{m-r_4, r_1-r_3} &M_{13} \\
               0_{r_4-r_2,r_3} & M_{22} & M_{23} \\
               0_{r_2,r_3}& 0_{r_2,r_1-r_3} & M_{33}\\
             \end{array}
           \right),
$$
 where
\[
M_{13}=M_{11}X_1+Y_1M_{33} \in \mathbb{R}^{(m-r_4)\times (n-r_1)}_+, \text{ and }
\]
\[
M_{23}=M_{22}X_2+Y_2M_{33}  \in \mathbb{R}^{(r_4-r_2)\times (n-r_1)}_+,
\]
for any $X_1\in \mathbb{R}^{r_3\times (n-r_1)}_+$, $Y_1\in \mathbb{R}^{(m-r_4)\times r_2}_+$, $X_2\in \mathbb{R}^{(r_1-r_3)\times {(n-r_1)} }_+$, $Y_2\in \mathbb{R}^{(r_4-r_2)\times r_2}_+$.

We have that $M$ is $(r_1,r_2)$-separable
where $\mathcal{K}_1$ contains the first $r_1$ columns of $M$ and $\mathcal{K}_2$ contains the last $r_2$ rows of $M$ since
\begin{eqnarray*}
  \left(
  \begin{array}{c}
    M_{13} \\
    M_{23} \\
  \end{array}
\right)&=&\left(
            \begin{array}{cc}
              M_{11} &0_{m-r_4, r_1-r_3} \\
              0_{r_4-r_2,r_3} & M_{22} \\
            \end{array}
          \right)\left(
                   \begin{array}{c}
                     X_1 \\
                     X_2 \\
                   \end{array}
                 \right)\\
                 & & +\left(
                           \begin{array}{c}
                             Y_1 \\
                             Y_2 \\
                           \end{array}
                         \right)M_{33}
\end{eqnarray*}

Similarly, we have that $M$ is $(r_3,r_4)$-separable
where $\mathcal{K}_1$ contains the first $r_3$ columns of $M$ and $\mathcal{K}_2$ contains the last $r_4$ rows of $M$ since $(0_{m-r_4, r_1-r_3} \; M_{13})$ is equal to
\begin{equation*}
  M_{11} \left(
     \begin{array}{cc}
       0 & X_1 \\
     \end{array}
   \right)
    + \left(
     \begin{array}{cc}
       0 & Y_1 \\
     \end{array}
   \right)\left(
            \begin{array}{cc}
              M_{22} & M_{23} \\
              0_{r_2,r_1-r_3} & M_{33} \\
            \end{array}
          \right).
\end{equation*}
\end{proof}

The simplest example is for a 3-by-3 matrix
that is $(2,1)$- and $(1,2)$-separable, and also trivially $(3,0)$- and $(0,3)$-separable:
\[
\left(
\begin{array}{ccc}
1 & 0 & 2 \\ 0 & 1 & 2 \\ 0 & 0 & 1
\end{array}
\right).
\]
We simply took $M_{11} = M_{22} = M_{33} = X_1 = X_2 = Y_1 = Y_2 = 1$ in Property~\ref{prop:constrnonuniq}. \\

However, it is possible to guarantee uniqueness of GS decompositions.
A possible way is to have a single pattern of zeros which is large enough.

\begin{property}[Condition for uniqueness] \label{prop:conduniq}
Let $M$ be minimal $(r_1,r_2)$-separable
and let $M$ not be $(r_1+r_2,0)$-separable nor $(0,r_1+r_2)$-separable.
If $M$ does not contain a pattern of zeros of size $r_3 r_4$ with $r_1+r_2 = r_3+r_4$, except for $M(\mathcal{K}_2,\mathcal{K}_1) = 0$, then  $M$ admits a unique GS decomposition of size $(r_1,r_2)$.
\end{property}
\begin{proof}
  This follows directly from Property~\ref{prop:pattern}.
 \end{proof}

Identifiability is a key aspect of NMF models; see the recent survey~\cite{xiao2019uniq} on this topic and the references therein.
Let us discuss this aspect and how GS-NMF allows to resolve this issue (given that the input matrix is in fact a GS matrix).
Given a nonnegative matrix $X = WH$ for $W \geq 0$ and $H \geq 0$, the conditions for such a factorization to be unique, up to permutation and scaling of the rank-one factors $W(:,k)H(k,:)$ ($1 \leq k \leq r$), are rather strong and in general not met in practice. In a few words, these conditions require $W$ \textit{and} $H$ to be sufficiently sparse.
Therefore, in practice, it is crucial to consider reglularized NMF models; for example adding sparsity constraints on the factors $W$ and/or $H$~\cite{hoyer2004non}.
A key NMF model that leads to identifiability of NMF under very mild conditions is minimum-volume NMF (MV-NMF)\footnote{In fact, the sufficiently scattered condition which is sufficient to guarantee MV-NMF to be identifiable is conjectured to also be necessary~\cite{xiao2019uniq}.} which requires the convex hull of the columns of $W$ to have the smallest possible volume while the columns of $H$ are normalized to have unit $\ell_1$ norm~\cite{miao2007endmember}. Under this model, NMF is identifiable given that the matrix $H$ is sufficiently sparse; this condition is referred to as the sufficiently scattered condition~\cite{huang2013non, lin2015identifiability, fu2018identifiability, xiao2019uniq}.
We will compare MV-NMF with GS-NMF in the numerical experiments (Section~\ref{sec:numexp}) and observe that GS-NMF is able to recover the true factors $W$ and $H$ that generated the GS matrix while standard NMF and MV-NMF fail to do so.
This means that GS-NMF really brings a new class of identifiable NMF solutions. The reason is that GS matrices satisfy different conditions than the sufficiently scattered condition. This allows proper algorithms, like the ones we will propose in the next sections, to take advantage of these properties hence making them able to recover the true $W$ and $H$.
We believe that this is a key asset of GS-NMF.
For example, in audio source separation,
it is reasonable to assume that the input matrix is a GS matrix (see the Introduction), while it might not be separable nor satisfy the sufficiently scattered condition.


 \section{Convex Optimization Model and Fast Gradient Method} \label{sec:optim}

In real data sets, due to the presence of noise (and model misfit),
the model \eqref{eq:combinmodel} should be modified to
 \begin{align}
  \min_{X\in \mathbb{R}^{n\times n}_+,Y\in \mathbb{R}^{m\times m}_+} & \|X\|_{row,0}+\|Y\|_{col,0} \nonumber \\
  & \text{ such that }
   \|M-MX-YM\| \leq \epsilon, \label{ee1}
 \end{align}
where $\epsilon$ denotes the noise level.
The norm of the residual $\|M-MX-YM\|$ can be chosen according to the noise statistic.
In this paper, we will consider the Frobenius norm, that is,
$\|M-MX-YM\|_F = \sum_{i,j} (M-MX-YM)_{i,j}^2$; see for example~\cite{dikmen2015learning} for a discussion on the choice of the objective function.

\subsection{Convex optimization model}

As it is challenging to solve \eqref{ee1}, it can be relaxed to a convex optimization model as follows:
 \begin{align}
  \min_{X\in \mathbb{R}^{n\times n}_+,Y\in \mathbb{R}^{m\times m}_+}& \|X\|_{1,q}+\|Y^T\|_{1,q}\nonumber \\
  \text{ such that } &
   \|M-MX-YM\|\leq \epsilon,\label{e2}
 \end{align}
where $\|X\|_{1,q}:=\sum^n_{i=1}\|X(i,:)\|_q$ and $\|Y^T\|_{1,q}:=\sum^n_{i=1}\|Y(:,i)\|_q$.
The quantities $\|X\|_{1,q}$ and $\|Y^T\|_{1,q}$ are the $\ell_1$ norm of the vector containing the $l_q$ norms of the rows of $X$ and the columns of $Y$, respectively.
The model aims to generate a matrix $X$ with only a few non-zero rows and a matrix $Y$ with only a few non-zero columns.
This model is a generalization of separable NMF convex relaxations: $q=2$ was proposed in~\cite{elhamifar2012see}, while $q=+\infty$ was proposed in~\cite{esser2012convex}.
In fact, \eqref{e2} coincides with the models
from~\cite{esser2012convex, elhamifar2012see} by taking $Y=0$.

The rationale behind this model is that the $\ell_1$ norm is the largest convex function smaller than $\ell_0$ norm on the $\ell_\infty$ ball; see for example~\cite{recht2010guaranteed}.
In other terms, $\|X\|_{1,q} \leq \|X\|_{row,0}$ as long as
$\|X(i,:)\|_q \leq 1$ for all $i$.

Considering $q = +\infty$, $\|X(i,:)\|_q \leq 1$ holds for example for $X \leq 1$. This can be assumed without loss of generality given that the input matrix is properly scaled.
\begin{defn} \label{def:scaled}
The matrix $M$ is scaled if $||M(:,j)||_1 = k_1$ for all $j$ and
$||M(i,:)||_1 = k_2$  for all $i$, for some $n k_1 = m k_2 > 0$.
\end{defn}
Given a nonnegative matrix $M$, it is in most cases possible to scale it, that is, find diagonal matrices $D_r$ and $D_c$ such that $M_s = D_r M D_c$ is  scaled. It requires that the matrix $M$ has sufficiently many non-zero elements.
When the matrix is scalable, the algorithm that alternatively scales the columns and rows of $M$ will converge to a scaled matrix. We refer the reader to~\cite{knight2008sinkhorn, olshen2010successive} for more details on this topic.

We have the following property.
\begin{property} \label{prop:smaller1}
Let $M$ be a scaled $(r_1,r_2)$-separable matrix.
Then $M$ can be decomposed as in~\eqref{eq:XY} with
\[
X(i,j) \leq X(i,i) \leq 1
  \text{ for } 1 \leq i,j \leq n,
   \]
\[
  \text{ and } \quad
  Y(l,t) \leq Y(t,t) \leq 1  \text{ for }
  1 \leq l,t \leq m.
\]
\end{property}
\begin{proof}
Using Property~\ref{prop:eqiv1} we have, after proper permutations of the columns and rows of $M$, that
$$M=\left(
     \begin{array}{cc}
       W_1 & W_1H_1+W_2H_2 \\
       0_{r_2,r_1} & H_2 \\
     \end{array}
   \right).
$$
Since $M$ is scaled, we have
$e^TM(:,j)=e^T W_1(:,j)=k_1$ for $1 \leq j \leq r_1$, where $e$ is the vector of all one of appropriate dimension.
For $j=r_1+1,\dots, n$, we have
\[
k_1
= e^T M(:,j)
\geq
e^T W_1 H_1(:,j-r_1)
  = k_1 e^T H_1(:,j-r_1),
 \]
 since all matrices involved are nonnegative.
 This implies that $H_1 \leq 1$.
 In fact, this implies the stronger condition $||H_1(:,j)||_1 \leq 1$ for all $j$.
By symmetry, the same result holds for $W_2$, that is, $W_2 \leq 1$ and $||W_2(i,:)||_1 \leq 1$ for all~$i$.
Therefore, up to permutations, using the same derivations as in Property~\ref{prop:eqiv2}, we have
$$
M=M\left(
     \begin{array}{cc}
       I & H_1 \\
       0 & 0 \\
     \end{array}
   \right)+\left(
     \begin{array}{cc}
       0 & I \\
       0 & W_2 \\
     \end{array}
   \right)M,
$$
where $H_1 \leq 1$ and $W_2 \leq 1$.
\end{proof}


In this paper, we focus on another convex model to tackle GS-NMF. For a scaled GS matrix, it can be written as follows:
\begin{equation}\label{e3}
\begin{split}
  \min_{X\in \mathbb{R}^{n\times n}_+,Y\in \mathbb{R}^{m\times m}_+} & \tr(X)+ \tr(Y),\\
  \mbox{such that} & \quad  \|M-MX-YM\|\leq \epsilon, \\
  &  \quad X(i,j) \leq X(i,i) \leq 1
  \text{ for } 1 \leq i,j \leq n,  \\
 & \quad  Y(l,t) \leq Y(t,t) \leq 1
 \text{ for } 1 \leq l,t \leq m.
\end{split}
\end{equation}
This model is the generalization of the model from~\cite{gillis2013robustness} for separable matrices, which is an improvement of the model from~\cite{recht2012factoring}.
The rationale behind this model is the following.
Since $X$ is nonnegative, minimizing its trace is equivalent to minimize the $\ell_1$ norm of its diagonal entries, that is, $\tr(X) = ||\diag(X)||_1$. Hence~\eqref{e3} promotes solutions whose diagonal is sparse.
Then, the constraints $X(i,j) \leq X(i,i)$ for all $i,j$ impose that the largest entry in each row is the corresponding diagonal entry. Hence, if a diagonal entry is equal to zero, the entire row is zero. This makes this model generate solutions that tend to be row sparse.
Note that for any feasible solution $X$ of~\eqref{e3}, we have
$\tr(X) \leq \|X\|_{row,0}$.
In fact, since $0 \leq X(i,j) \leq X(i,i)$ for all $i,j$,
$\|X\|_{1,\infty} = \tr(X)$ for any $X$. Moreover, since $X \leq 1$, $\|X\|_{1,\infty} \leq \|X\|_{row,0}$.
By symmetry, we also have $\tr(Y) \leq \|Y\|_{col,0}$.

Unfortunately, as opposed to the model for separable matrices, we were not able to show that~\eqref{e3} is provably able to recover the correct set of column and row indices, even in the presence of low-noise levels.
This is an important direction of future research. However, in the numerical experiments in Section~\ref{sec:numexp},
this model performs this task perfectly in all tested scenarios (see Figures~\ref{fullacc} and~\ref{acc-mid}).

The model~\eqref{e3} can easily be generalized for non-scaled $M$; see Section~\ref{sec:fgm}.
Compared to~\eqref{e2}, it has an important advantage: It is a smooth optimization problem, and the projection onto the feasible set can be performed efficiently, in $\mathcal{O}(n^2 \log n+ m^2 \log m)$ operations~\cite{gillis2018fast}.
Therefore, we can easily design first-order optimization method with strong convergence guarantees; see Section~\ref{sec:fgm}.

\subsection{Fast Gradient Method for GS-NMF} \label{sec:fgm}


 Let us generalize the model~\eqref{e3} to non-scaled matrices, as done for separable matrices in~\cite{gillis2014robust}.
 Using essentially a similar argument as in the proof of Property~\ref{prop:smaller1}, we have for a GS matrix $M$ that for all $j$
 \[
 M(:,j)
 \leq M(:,\mathcal{K}_1) P_1(:,j)
 \leq M(:,\mathcal{K}_1(k)) P_1(k,j) \text{ for all $k$},
 \]
 since $M = M(:,\mathcal{K}_1) P_1 + P_2 M(\mathcal{K}_2,:)$.
 Taking the $\ell_1$ norm on both sides, we have
 \[
 P_1(k,j) \leq \frac{|| M(:,j) ||_1}{||M(:,\mathcal{K}_1(k))||_1} \text{ for all } k,j.
 \]
 A similar observation can be made for $P_2$, which leads to the generalization of~\eqref{e3} for non-scaled matrices:
 \begin{align}
 \min_{X\in \Omega_1, Y\in \Omega_2} &\tr(X)+\tr(Y)\nonumber \\
  & \text{ such that }
   \|M-MX-YM\|\leq \epsilon,\label{e4}
 \end{align}
where the sets $\Omega_1$ and $\Omega_2$ are defined as
 $$
 \Omega_1=\{X\in \mathbb{R}^{n\times n}_+| X \leq 1, w_i X(i,j)\leq w_j X(i,i) \text{ for all } i,j\},$$
 $$
 \Omega_2=\{ Y \in \mathbb{R}^{m\times m}_+
 | Y \leq 1, \hat{w}_t Y(l,t) \leq \hat{w}_l Y(t,t)  \text{ for all } l,t\},
 $$
  where the vector ${w}\in \mathbb{R}^{n}_+$ contains the $l_1$ norm of the columns $M$, that is, $w_j=\|M(:,j)\|_1$ for all $j=1,\dots,n$,
  and  the vector $\hat{w}\in \mathbb{R}^{m}_+$ contains the $l_1$ norm of the rows of $M$, that is,
  $\hat{w}_l=\|M(l,:)\|_1$ for all $l=1,\cdots,m$.

To solve the smooth convex problem~\eqref{e4},
interior-point methods can be used for example using SDPT3~\cite{toh1999sdpt3}.
However using such second-order method to solve~\eqref{e4} which has $n^2 + m^2$ variables and as many constraints would be numerically expensive.
Moreover, in our case, high accuracy solutions are not crucial: the main goal of solving~\eqref{e4} is to identify the important columns and rows of $M$ which correspond to the largest entries in the diagonal entries of $X$ and $Y$.
Therefore, we use Nesterov's optimal first-order method~\cite{nes83, nes04}, namely, a fast gradient method, similarly as done
in~\cite{gillis2018fast} for separable matrices. Here ``fast'' refers to the fact that it attains
the best possible convergence rate of $\mathcal{O}(1/k^2)$ in the
first-order regime.
To do so, we consider the penalized version of~\eqref{e4}:
 \begin{align}
 &\min_{X\in \Omega_1, Y\in \Omega_2}
F(X,Y),
\label{mod1}
 \end{align}
 with $F(X,Y) = \frac{1}{2}\|M-MX-YM\|^2_F + \lambda \big(\tr(X)+\tr(Y) \big)$,
where $\lambda > 0$ is a penalty parameter which balances the importance between the approximation error $\|M-MX-YM\|^2_F$ and the sum of the traces of $X$ and $Y$.

To initialize $X$ and $Y$
and set the value of $\lambda$, we adopt the following strategy described in Algorithm~\ref{algo:initfgm}, similarly as in~\cite{gillis2018fast}:
\begin{itemize}

  \item Extract a subset $\mathcal{K}_1$ of columns and a subset $\mathcal{K}_2$ of rows of $M$
  such that $|\mathcal{K}_1|+\mathcal{K}_2=r$
  using the heuristic algorithm referred to as GSPA; see Section~\ref{sec:heur}.

  \item Compute the corresponding optimal weights $(P_1^*,P_2^*)$ which is the solution to
 \begin{equation} \label{eq:p1p2}
  \mathop{\min}_{
 \begin{array}{c}
 P_1 \in \mathbb{R}^{r_1 \times n}_+, \\
  P_2 \in \mathbb{R}^{m \times r_2}_+
  \end{array}
  }
  \|M-M(:,\mathcal{K}_1)P_1 - P_2 M(\mathcal{K}_2,:)\|^2_F.
 \end{equation}

  We used the coordinate descent implemented in~\cite{gillis2012accelerated}.

  \item  Define $X_0(\mathcal{K}_1,:)=P_1^*$ and $Y_0(:,\mathcal{K}_2)=P_2^*$,
  while $X_0(i,:)=0$ for $i \notin \mathcal{K}_1$
  and $Y_0(:,j)=0$ for  $j \notin \mathcal{K}_2$.

  \item  Set $\lambda = \tilde{\lambda} \frac{\|M-MX_0-Y_0M\|}{2r}$, where $r=r_1+r_2$ and some $\tilde{\lambda}$. Typically, $\tilde{\lambda} \in [10^{-3},10]$ works well.

\end{itemize}

\begin{algorithm}[h]
\caption{Initialization for GS-FGM \label{algo:initfgm}}
\begin{algorithmic}[1]
\REQUIRE $M\in\mathbb{R}_{+}^{m\times n}$, $r$,
$\tilde{\lambda} > 0$.

\ENSURE Initial solution $(X_0,Y_0) \in \mathbb{R}^{n \times n} \times \mathbb{R}^{m \times m}$ for~\eqref{mod1},
and parameter $\lambda$ balancing the two terms in the objective.

\STATE $(\mathcal{K}_1, \mathcal{K}_1)$ = GSPA($M,r$); see Algorithm~\ref{algo:gspa};

\STATE Compute $(P_1^*,P_2^*)$
  as a solution to~\eqref{eq:p1p2};

  \STATE $X_0(\mathcal{K}_1,:) = 0_{n,n}$;  $Y_0(:,\mathcal{K}_2) = 0_{m,m}$;

\STATE $X_0(\mathcal{K}_1,:) = P_1^*$;  $Y_0(:,\mathcal{K}_2) = P_2^*$;

 \STATE $\lambda = \tilde{\lambda} \frac{\|M-MX_0-Y_0M\|}{2r}$.

\end{algorithmic}
\end{algorithm}

To solve model (\ref{mod1}), we employ Algorithm~\ref{algo:fgm} which is an optimal first-order method to minimize $F(X,Y)$ over the sets $\Omega_1$ and $\Omega_2$. To compute the Euclidean projection of $X$ on the set $\Omega_1$ and of $Y$ on the set $\Omega_2$, we use the method proposed in~\cite{gillis2018fast}, which only requires $\mathcal{O}(n^2\log n)$ and $\mathcal{O}(m^2\log m)$ operations for the projection of $X$ and $Y$, respectively.

The main computational cost of Algorithm~\ref{algo:fgm} resides in lines 2, 7 and 9.
For line 2, the maximum singular value of $M$ can be well approximated by the power method which needs $\mathcal{O}(mn)$ operations.
In line 5,
the computation of the different matrix products require $\mathcal{O}(mn^2 + m^2n)$ operations.
For line 7, the projections of $X$ and $Y$ require
$\mathcal{O}(n^2 \log n + m^2 \log m)$~\cite{gillis2018fast}.
Finally, Algorithm~\ref{algo:fgm}
requires $\mathcal{O}(m n^2 + m^2 n)$ operations, assuming $m \geq \log n$ and $n \geq \log m$.


\begin{algorithm}[h]
\caption{GS-NMF with a Fast Gradient Method (GS-FGM) \label{algo:fgm}}
\begin{algorithmic}[1]
\REQUIRE $M\in\mathbb{R}_{+}^{m\times n}$,
number $r_1$ of columns and $r_2$ of rows to extract,
and
maximum number of iterations $maxiter$.
\ENSURE Matrices $X$ and $Y$ solving~\eqref{mod1},
and a set $\mathcal{K}_1$ of column indices and a set $\mathcal{K}_2$ of row indices such that $\min_{P_1,P_2 \geq 0} ||M-M(:,\mathcal{K}_1)P_1+P_2M(\mathcal{K}_2,:)||_F$ is small.

\STATE \emph{\% Initialization }

\STATE $\alpha_0 \leftarrow 0.05$;
$L\leftarrow 2\sigma_{\max}(M)^2$; \\
Initialize $X$, $Y$ and $\lambda$;
see Algorithm~\ref{algo:initfgm}.


\FOR{$k$ = 1 : maxiter}

\STATE \emph{\% Keep previous iterates in memory}

\STATE $X_p\leftarrow X$; $Y_p\leftarrow Y$;

\STATE \emph{\% Gradient computation}

\STATE $\nabla_X F(X,Y)\leftarrow M^TMX+M^TYM-M^TM+\lambda I_n$; \\ $\nabla_Y F(X,Y)\leftarrow MXM^T+YMM^T-MM^T+\lambda I_m$;

\STATE \emph{\% Gradient step and projection}

\STATE $X_n \leftarrow \mathrm{P}_\Omega(X-\frac{1}{L}\nabla_X F(X,Y))$; \\
$Y_n \leftarrow \mathrm{P}_\Omega(Y-\frac{1}{L}\nabla_Y F(X,Y))$;

\STATE \emph{\% Acceleration / Momentum step}

\STATE $X\leftarrow X_n +\beta_k(X_n-X_p)$; \\
$Y\leftarrow Y_n +\beta_k(Y_n-Y_p)$; \\ where $\beta_k=\frac{\alpha_{k-1}(1-\alpha_{k-1})}{\alpha^2_{k-1}+\alpha_k}$ such that $\alpha_k\geq 0$ and $\alpha^2_k=(1-\alpha_k)\alpha^2_{k-1}$

\ENDFOR

\STATE $\mathcal{K}_1\leftarrow \text{post-process}(X,r_1)$; \\
$\mathcal{K}_2\leftarrow \text{post-process}(Y,r_2)$.
\end{algorithmic}
\end{algorithm}

We will refer to Algorithm~\ref{algo:fgm} as GS-FGM.
Note that the numbers $r_1$ and $r_2$ are given as input of Algorithm~\ref{algo:fgm}.
However, they can also be detected automatically by identifying the entries on the diagonals of $X$ and $Y$ above a certain threshold.
For simplicity, we will use the same two post-processing procedures as in~\cite{gillis2018fast}:
\begin{itemize}
 \item   For synthetic data sets, we simply pick the $r_1$ largest entries of the diagonals of $X$ and the $r_1$ largest entries of the diagonals of $Y$.

  \item For real data sets, it is also important to consider off-diagonal entries of $X$ and $Y$. The reason is that the input matrix can be far from being a GS matrix. For example, an outlying column will in general lead to a large diagonal entry in $X$ (since an outlier is in general not well approximated with other data points) while the other entries on the same row will be close to zero (since an outlier is in general useless to reconstruct other data points). This means that if a row of $X$ has many large entries, it is likely to be more important than a row with only a large diagonal entry. For this reason, we sort the columns of $M$ by applying SPA on $X^T$ as done in~\cite{gillis2018fast}; an similarly for $Y$ to sort the rows of $M$. It remains to decide how many column and row indices to pick in each of these ordered sets. To do so, we sequentially select a column or a row of $M$ as follows: at each step, we will select the column/row of $M$ such that the residual after projection onto its orthogonal complement is the smallest, and at the next step, we replace $M$ by the corresponding residual; this shares some similarity with the algorithm presented in the next section.
\end{itemize}

\section{Heuristic Algorithm for GS-NMF} \label{sec:heur}

Algorithm~\ref{algo:fgm} is computationally expensive, and does not scale linearly with the dimension of the input matrix. For large-scale problems, it would not be applicable.
When running on a standard computer, $m$ and $n$ should be limited to values below a thousand.
A possible way to overcome this issue is to preselect, a priori, a subset of columns and rows of $M$, reducing the number of variables; see Section~\ref{sec:doc} for a discussion.

In this section, we derive a fast heuristic algorithm for GS-NMF. It is inspired from one of the most widely used separable NMF algorithm, namely the successive projection algorithm (SPA).  SPA is essentially equivalent to QR with column pivoting; it was introduced in~\cite{araujo2001successive} in the contex of spectral unmixing but has been rediscovered many times; see the discussions in~\cite{ma2014signal, gillis2014and}.
Moreover, SPA is robust in the presence of noise~\cite{gillis2014fast}.
SPA assumes that the input matrix has the form $M = M(:,\mathcal{K}) [I_r, H'] \Pi$
where $\Pi$ is a permutation matrix and $H' \geq 0$ and $||H'(:,j)||_1 \leq 1$ for all~$j$. This means that the columns of $M$ are in the convex hull of the columns of $M(:,\mathcal{K})$; in other words, the columns of
$M(:,\mathcal{K})$ are the vertices of the convex hull of the columns of $M$.
We can identify a vertex of this convex hull using the $\ell_2$ norm as it must be maximized at a vertex. This is the main idea behind SPA which sequentially identifies the columns in $\mathcal{K}$ as follows:
at each step, it first extracts the column of $M$ that has the largest $\ell_2$ norm and then project all columns of $M$ onto the orthogonal complement of the extracted column.
Under the assumption that $M(:,\mathcal{K})$ is full column rank, SPA recovers the set $\mathcal{K}$.

%

Algorithm~\ref{algo:gspa} generalizes SPA in a straightforward manner; we refer to it as generalized SPA (GSPA). At each iteration, it identifies a column or a row of $M$ that will be used as a basis in a GS decomposition.
Each iteration is made of two steps: First, it computes the norms of the columns of $M$ multiplied by $n$ and the norms of the rows of $M$ multiplied by $m$, and selects the column/row corresponding to the largest value.
Second, it projects the columns/rows of $M$ onto the orthogonal complement of the selected column/row.
\begin{algorithm}[h]
\caption{Generalized Successive Projection Algorithm (GSPA) \label{algo:gspa}}
\begin{algorithmic}[1]
\REQUIRE A scaled matrix $M\in\mathbb{R}_{+}^{m\times n}$, number $r$ of columns and rows to extract.
\ENSURE A set $ \mathcal{K}_1\subset \{1,2,\dots,n\}$ of column indices and $\mathcal{K}_2\subset \{1,2,\dots,m\}$ of row indices.
\STATE Let $R=M$, $\mathcal{K}_1=\{\}$, $\mathcal{K}_2=\{\}$.
\WHILE{$R \neq 0$ and $|\mathcal{K}_1|+|\mathcal{K}_2| \leq r$}
\STATE $p=\argmax_{1 \leq j \leq n} n \|R(:,j)\|_2^2$;
\STATE $q=\argmax_{1 \leq i \leq m} m \|R(i,:)\|_2^2$;
\IF{$n \|R(:,p)\|_2^2 \geq m \|R(q,:)\|_2^2 $}
\STATE $R = \left( I-\frac{R(:,p)R^T(:,p)}{\|R(:,p)\|^2_2} \right)R$;
\STATE $\mathcal{K}_1=\mathcal{K}_1\cup \{p\}$;
\ELSE
\STATE $R^T=\left(I-\frac{R(q,:)^TR(q,:)}{\|R(q,:)\|^2_2}\right)R^T$;
\STATE $\mathcal{K}_2=\mathcal{K}_2\cup \{q\}$;
\ENDIF
\ENDWHILE
\end{algorithmic}
\end{algorithm}

One can check that the computational cost of GSPA is $\mathcal{O}(mnr)$ operations; the main operations being matrix-vector products.
As for SPA, GSPA should be applied to a scaled GS matrix.
Unfortunately, as opposed to SPA, there is no guarantee that a column (resp.\@ row) with maximum $\ell_2$ norm will belong to the set $\mathcal{K}_1$ (resp.\@ $\mathcal{K}_2$); see Example~\ref{ex:1} below  where we construct a particular GS matrix for which GSPA fails.
Hence GSPA is a heuristic for GS-NMF.
However, a topic for further research would be to show that GSPA works under suitable additional conditions, that is, for a subset of GS matrices.  In fact, as we will see in the numerical experiments, GSPA works remarkably well for some randomly generated GS matrices.

\begin{example} \label{ex:1}
Let us consider the following (2,2)-separable matrix
\[
M =
\left(
\begin{array}{cccc}
W_1 & W_1 H_1 + W_2 H_2 \\
0 & H_2
\end{array}
\right),
\]\[ \text{with }
W_1 =
\left(
\begin{array}{cc}
 1    & \epsilon \\
 1 & 2 \\
 1 & 3
\end{array}
\right),
H_1 = \left(
\begin{array}{ccc}
 \epsilon  & 2\epsilon & 3\epsilon  \\
  \epsilon    &  1 & 2
\end{array}
\right),
\]
$H_2 = W_1^T$, $W_2 = H_1^T$.
For $\epsilon = 0.001$, we have
\[
M =
\left( \begin{array}{ccccc}
 1 &  0.001 &  0.002 &  0.006 &  0.009 \\
 1 &  2 &  0.006 &  4.004 &  7.005 \\
 1 &  3 &  0.009 &  7.005 &  12.006 \\
 0 &  0 &  1 &  1 &  1 \\
 0 &  0 &  0.001 &  2 &  3 \\
\end{array} \right).
\]
Using SPA, one can check that $M$ is not (4,0)- nor (0,4)-separable. Since there is no pattern of zeros of dimension (1,3) or (3,1), it is not (1,3)- nor (3,1)-separable (see Property~\ref{prop:pattern}).
Therefore, $\mathcal{K}_1 = \{1,2\}$ and $\mathcal{K}_2 = \{5,6\}$ is the only possible GS decomposition with
$|\mathcal{K}_1|+|\mathcal{K}_2| = 4$.
The scaled version of $M$ is
\[
M_s = \left( \begin{array}{ccccc}
 4.654 &  0.028 &  0.251 &  0.034 &  0.033 \\
 0.212 &  2.551 &  0.034 &  1.045 &  1.157 \\
 0.134 &  2.421 &  0.033 &  1.157 &  1.255 \\
 0 &  0 &  4.654 &  0.212 &  0.134 \\
 0 &  0 &  0.028 &  2.551 &  2.421 \\
\end{array} \right).
\]
The column with largest $\ell_2$ norm is the third which is not in $\mathcal{K}_1$, and the row with the largest $\ell_2$ norm is the first which is not in $\mathcal{K}_2$; they both have the same norm.
Therefore, GSPA fails: it returns
$\mathcal{K}_1=\{1,2,3\}$ and $\mathcal{K}_2=\{5\}$,
or $\mathcal{K}_1=\{2\}$ and $\mathcal{K}_2=\{1,4,5\}$ (rows and columns of $M_s$ are the same up to permutations, because $H_2=W_1^T$ and $H_1 = W_2^T$).

Note however that the matrix is almost (3,1)- and (1,3)-separable. In fact,
\[
\frac{\min_{P_1,P_2 \geq 0}
|| M - M(:,1:3) P_1 - P_2 M(5,:)||_F}{||M||_F}
= 0.0244\%.
\]

Note also that the model~\eqref{e3} applied on $M_s$ identifies $X$ and $Y$ perfectly, with the form of~\eqref{eq:XY}.
\end{example}

\section{Numerical Experiments} \label{sec:numexp}

 In this section, we conduct experiments on synthetic (Section~\ref{sec:synth}), document (Section~\ref{sec:doc}) and image data sets (Section~\ref{sec:images}) to test the performance of the proposed  models.
 All experiments were run on Intel(R) Core(TM) i5-5200 CPU @2.20GHZ with 8GB of RAM using Matlab.

Since GS-NMF has not been considered before, we cannot compare GS-FGM and GSPA to existing GS-NMF algorithms.
Instead, we consider a state-of-the-art separable NMF algorithm, namely, the successive projection algorithm (SPA); see the description in Section~\ref{sec:heur}.

Separable NMF algorithms such as SPA can only identify a subset of the columns of the input matrix $M$. Hence, we consider the following three possibilities:
\begin{enumerate}

\item SPA is applied on $M$ to identify $r_1$ important columns of $M$, and then on $M^T$ to identify $r_2$ important rows of $M$. We refer to this variant as SPA*. Note that this is another heuristic to tackle GS-NMF. It is rather different than GSPA that only requires $r$ as an input and identifies automatically the number of columns and rows to extract; see Algorithm~\ref{algo:gspa}.

\item SPA is applied on $M$ to identify $r=r_1+r_2$ columns of $M$. We refer to this variant as  SPA-C.

\item SPA is applied on $M^T$ to identify $r=r_1+r_2$ rows of $M$. We refer to this variant as  SPA-R.

\end{enumerate}

Although the last two approaches (namely, SPA-C and SPA-R) will not be able to tackle GS-NMF, it is interesting to include them in the comparison to see how much GS-NMF algorithms can reduce the approximation error compared to separable NMF algorithms.

\begin{remark}
We have also considered other separable NMF algorithms combined with the above strategies; namely the successive nonnegative projection algorithm (SNPA)~\cite{gillis2014successive},
XRAY~\cite{kumar2013fast} and FGNSR~\cite{gillis2018fast}. However, they provided results similar to SPA hence we do not show these results here for the simplicity of the presentation.
\end{remark}

We have used a stopping criterion for GS-FGM based on the evolution of the iterates and the error:
we stop GS-FGM when one of the following conditions holds:
\[
\frac{e(k) - e(k-1)}{e(k-1)} \leq \delta
\text{ or }
||Z^{(k+1)} - Z^{(k)}||_F
\leq \delta ||Z^{(1)} - Z^{(0)}||_F ,
\]
where $e(k)$ is the objective function at iteration $k$, $Z^{(k)} = (X^{(k)}, Y^{(k)})$
is the solution at iteration $k$, and $0 < \delta < 1$ is a parameter. We will use $\delta = 10^{-4}$ for synthetic data sets and $\delta = 10^{-2}$ for the real data sets (documents and images).

We will also compare these algorithms to the following algorithms:
\begin{itemize}
\item A state-of-the-art NMF algorithm, namely the accelerated hierarchical alternating least squares (A-HALS) algorithm~\cite{gillis2012accelerated}. We will refer to this algorithm as NMF.

\item  A state-of-the-art minimum-volume NMF algorithm~\cite{fu2016robust}. We use the improved implementation that uses a fast gradient method to solve the subproblems in $W$ and $H$ from~\cite{leplat2019minimum}. We will refer to this algorithm as MV-NMF.
\end{itemize}
For both algorithm, we use the default parameters and perform 1000 iterations.

The code is available from \url{https://sites.google.com/site/nicolasgillis/code}.

\subsection{Synthetic data sets} \label{sec:synth}

In this section, we compare the different algorithms on two types of synthetic data sets:
fully randomly generated (Section~\ref{sec:randnoise}),
and the so-called middle-point experiment with
adversarial noise (Section~\ref{sec:advnoise}).

For GS-FGM, we identify the subsets $\mathcal{K}_1$ and $\mathcal{K}_2$ by using the $r_1$ largest diagonal entries of $X$ and $r_2$ largest diagonal entries of $Y$, respectively. In all experiments, we run GS-FGM with the parameter
$\tilde{\lambda} =  0.25$ and
maxiter = 1000.

Given the subsets
$(\mathcal{K}_1,\mathcal{K}_2)$ computed by an algorithm, we will report the following three quality measures:
\begin{enumerate}

  \item The accuracy, defined as
\begin{equation} \label{accuracy}
  \text{accuracy} =
  \frac{|\mathcal{K}^*_1\cap\mathcal{K}_1| + |\mathcal{K}^*_2\cap\mathcal{K}_2|}{|\mathcal{K}^*_1| + |\mathcal{K}^*_2|},
\end{equation}
  where $\mathcal{K}^*_1$ and $\mathcal{K}^*_2$ are the true column and row indices used to generate $M$.
  The accuracy reports the proportion of correctly identified row and column indices.

  Note that the accuracy cannot be computed for NMF and MV-NMF that do not identify columns and rows of the input matrix.

  \item The relative approximation  error, defined as
 \begin{equation} \label{error}
  \frac{\min_{P_1 \geq 0, P_2 \geq0}\|M-M(:,\mathcal{K}_1) P_1 - P_2 M(\mathcal{K}_2,:)\|_F}{\|M\|_F}.
\end{equation}

 Note that we compute $P_1$ and $P_2$ using the coordinate descent method from~\cite{gillis2012accelerated}.

\item The distance to ground truth: given the solution $(W,H)$ of an algorithm, it is
defined as
\begin{equation}\label{distgroundtruth}
  \frac{\min_{\pi_w}
  \|W^*-W(:,\pi_w)\|_F}{2\|W^*\|_F}
  +
  \frac{\min_{\pi_h}
  \|H^*-H(\pi_h,:)\|_F}{2\|H^*\|_F}
 \end{equation}
 where $\pi_w$ and $\pi_h$ are permutations, and $(W^*, H^*)$ is the ground truth that generated the noiseless input data $M = W^* H^*$ (see Definition~\ref{def:gsmat1}).
 Note that for GS-NMF algorithms, $W = [M(:,\mathcal{K}_1), P_2^*]$ and
 $H = [P_1^*; M(\mathcal{K}_2,:)]$ where $P_1^*$ and $P_2^*$ are the solutions of~\eqref{error}.

\end{enumerate}


\subsubsection{Fully randomly generated data} \label{sec:randnoise}

We generate noisy (20,20)-separable matrices $M \in \mathbb{R}^{100\times 100}$ as follows:
$$
\Pi_r \max
\left( 0 ,
 \underbrace{  D_r
\left(
     \begin{array}{cc}
       W_1 & W_1 H_1+W_2 H_2 \\
       0_{20,20} & H_2 \\
     \end{array}
   \right)  D_c
    }_{M^s}
   + N \right) \Pi_c,
$$
where

\begin{itemize}

  \item    The entries of the matrices
  $W_1 \in \mathbb{R}^{80 \times 20}$  and   $H_2 \in \mathbb{R}^{20 \times 80}$ are generated uniformly at random in the interval [0,1] using the \texttt{rand} function of MATLAB. $H_1\in \mathbb{R}^{20 \times 80}$ and  $W_2\in \mathbb{R}^{80 \times 20}$ are generated using sparse uniformly distributed random matrices
  with the density equal to 50\% (\texttt{sprand}(m,n,0.5) in Matlab).

   \item  The diagonal matrices $D_r$ and $D_c$ are computed so that $M^s$ is scaled; we use the algorithm that alternatively scales the columns and rows of the input matrix~\cite{knight2008sinkhorn, olshen2010successive}.

   \item  The entries of the noise $N \in \mathbb{R}^{100 \times 100}$ are generated uniformly at random with the normal distribution of mean 0 and standard deviation 1 using the \texttt{randn} function of MATLAB.  The noise matrix $N$ is then normalized so that
     $||N||_F = \epsilon ||M^s||_F$,  where $M^s$ is the noiseless scaled (20,20)-separable matrix, and $\epsilon$ is a parameter that relates to the noise level.

   \item  $\Pi_r$ and $\Pi_c$ are randomly generated permutation matrices.
\end{itemize}

We use 20 noise levels
$\epsilon$ logarithmically spaced in $[10^{-3},1]$ (in Matlab, \texttt{logspace(-3,0,20)}).
For each noise level, we generate 25 such matrices and report the average quality measures on
Figures~\ref{fullacc}, \ref{fullres} and~\ref{fulldist}.

\begin{figure}
\includegraphics[width=1\textwidth]{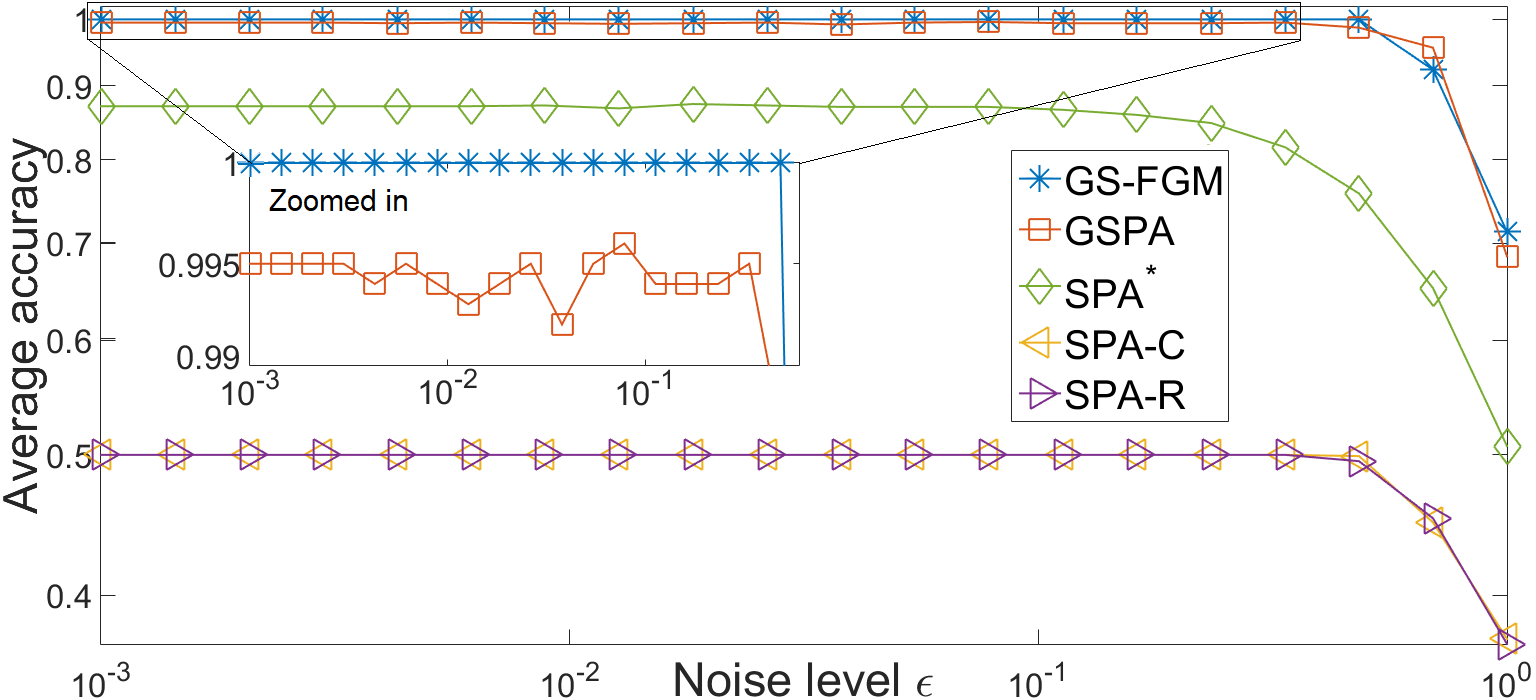}
\caption{Average accuracy~\eqref{accuracy} for the different algorithms on the fully randomly generated GS matrices.  \label{fullacc}}
\end{figure}

\begin{figure}
\includegraphics[width=1\textwidth]{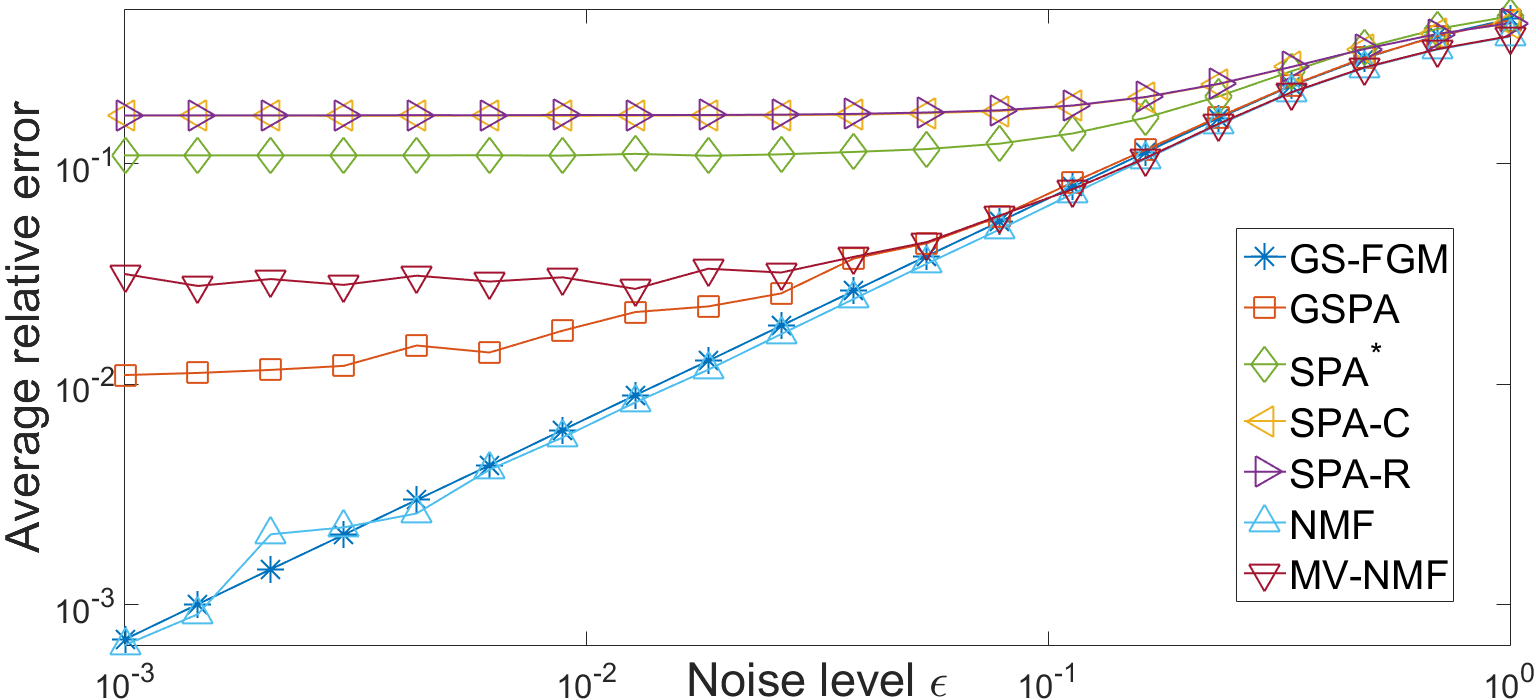}
\caption{Average relative approximation error~\eqref{error} on the fully randomly generated GS matrices.
\label{fullres}}
\end{figure}

\begin{figure}
\includegraphics[width=1\textwidth]{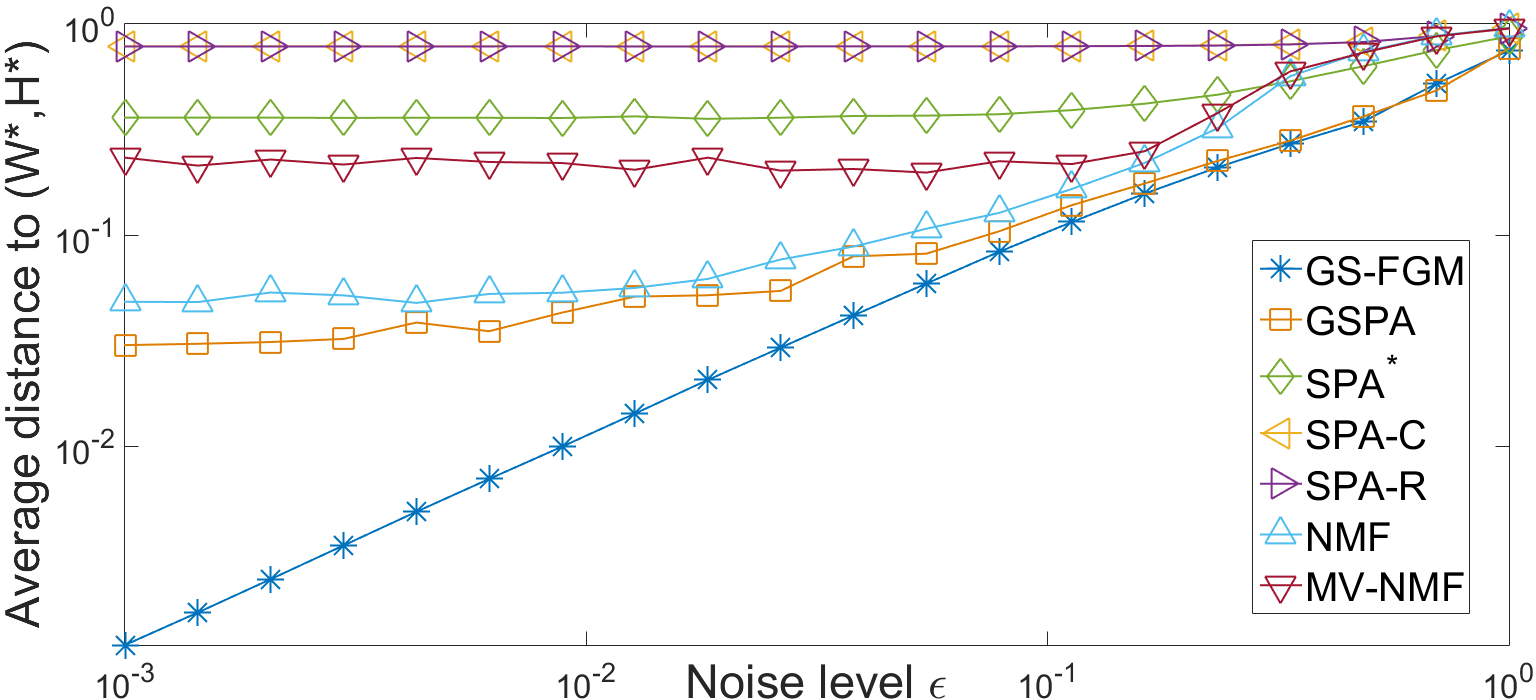}
\caption{Distance to ground truth~\eqref{distgroundtruth} on the fully randomly generated GS matrices.
\label{fulldist}}
\end{figure}

We observe the following:
\begin{itemize}

\item As expected, SPA-C and SPA-R have an accuracy of at most 50\%, and perform very badly to recover the ground truth.
Moreover, they also perform much worse than GS-FGM in terms of approximation error.
This validates the GS-NMF model in the sense that it is able to reduce the approximation error compared to separable NMF for the same factorization rank.

\item  In terms of accuracy, GS-FGM performs the best, having an accuracy of 100\% for all $\epsilon \leq 0.483$. Surprisingly, even for low-noise levels, GSPA is not able to recover exactly all column and row indices (see the zoomed-in graph on Figure~\ref{fullacc}). SPA$^*$ performs better than SPA-C and SPA-R, but much worse than GS-FGM and GSPA.

\item In terms of relative error, NMF performs similarly as GS-FGM. This is not surprising since NMF factorizes the input matrix with no other constraints than nonnegativity. It is actually nice to observe that GS-FGM produces solutions with the same relative error than NMF although this model is much more constrained;
the reason is that the input data satisfies our assumption.

\item In terms of recovering the ground truth,  GS-FGM outperforms all other algorithms, followed by GSPA. NMF and MV-NMF are not able to recover the ground truth due the non-uniqueness of the solution. This shows experimentally the advantage of using GS-NMF to have identifiability of the solution, given that the input matrix is close to being a GS matrix; see the discussion in Section~\ref{sec:uniq}.
\end{itemize}

In the next section, we construct more complicated synthetic data sets for which the behavior of the different algorithms is further highlighted.

\subsubsection{ Middle points and adversarial noise }  \label{sec:advnoise}

In this section, we generate the noisy GS matrices exactly as in the previous section except that
$m = 78$, $n = 55$, $r_1 = 10$, $r_2 = 12$, and
\begin{itemize}

\item   the $\binom{r_1}{2} = 45$ columns $H_1$
(resp.\@ $\binom{r_2}{2}=66$ rows of $W_2$) contain all possible combinations of two non-zero entries equal to $0.5$ at different positions.
Hence, the columns of $W_1H_1$ (resp.\@ rows of $W_2H_2$) are all the middle points of the columns of $W_1$ (resp.\@ rows of $H_2$).

\item    No noise is added to the first $r_1$ columns and last $r_2$ rows of $M^s$, that is, $N(:,1:r_1)=0$ and
$N(m-r_2+1:m,:)=0$, while we set $N(1:m-r_2,r_1+1:n)$ equal to
      $$
       M^s(1:m-r_2,r_1+1:n) -
      \bar{w} e^T -
      e \bar{h},
      $$
      where
      $\bar{w}$ and $\bar{h}$ are the average of the columns of $W_1$ and rows of $H_2$, respectively, that is, $\bar{w} = \frac{1}{r_1} W_1e$ and
       $\bar{h} = \frac{1}{r_2} e^T H_2$.
  Intuitively, the noise will move the data point towards the outside of the convex hull of the columns of $W_1$ and the rows of $H_2$.
  The noise matrix $N$ is normalized so that
  $||N||_F = \epsilon ||M^s||_F$.


\end{itemize}

This example is inspired by the so-called middle point experiment from~\cite{gillis2014fast}. Intuitively, we are moving the data points towards the outside of the set spanned by $W_1$ and $H_2$.

We use the same strategy for the choice of the noise levels, and report the average quality measures over  25 trials on
Figures~\ref{acc-mid}, \ref{res-mid} and~\ref{dist-mid}.
\begin{figure}
\includegraphics[width=1\textwidth]{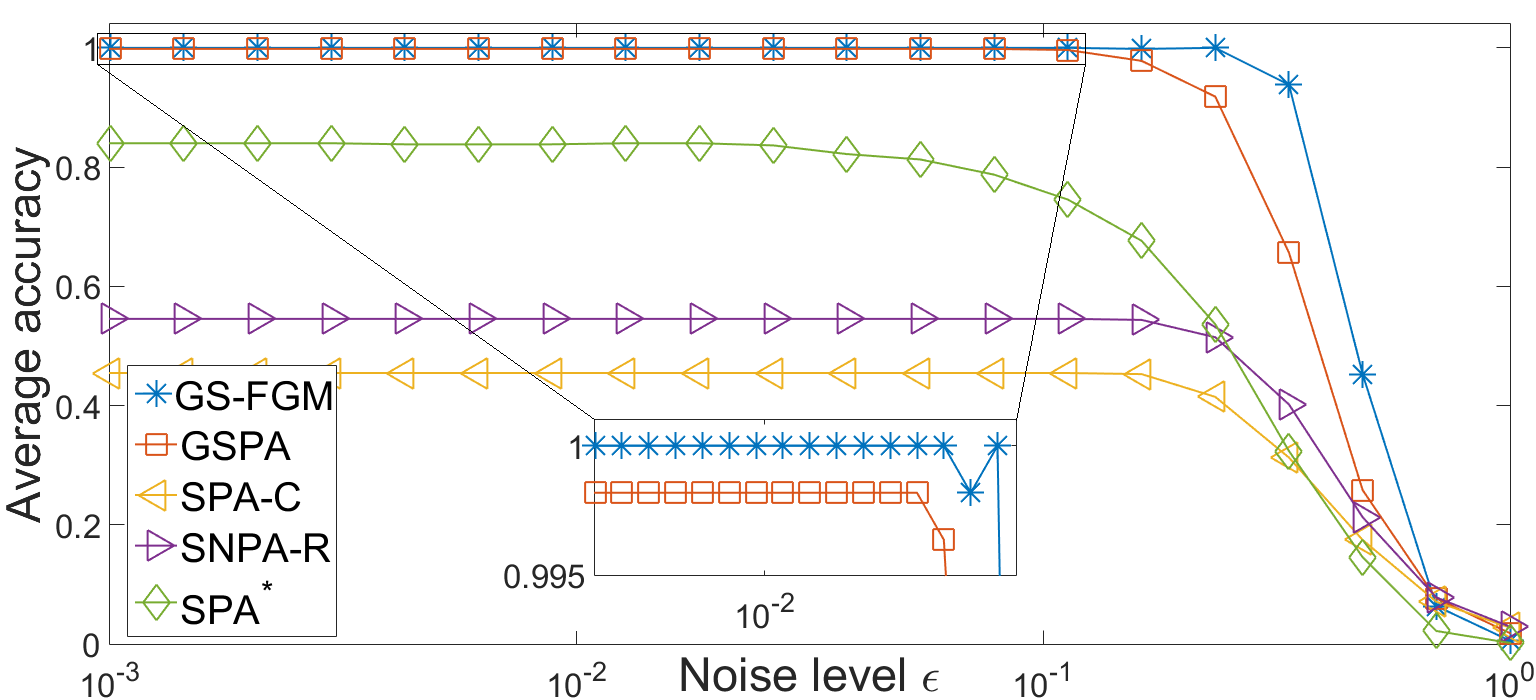}
\caption{Average accuracy~\eqref{accuracy} for the different algorithms on the middle-point GS matrices with adversarial noise. \label{acc-mid}}
\end{figure}
\begin{figure}
\includegraphics[width=1\textwidth]{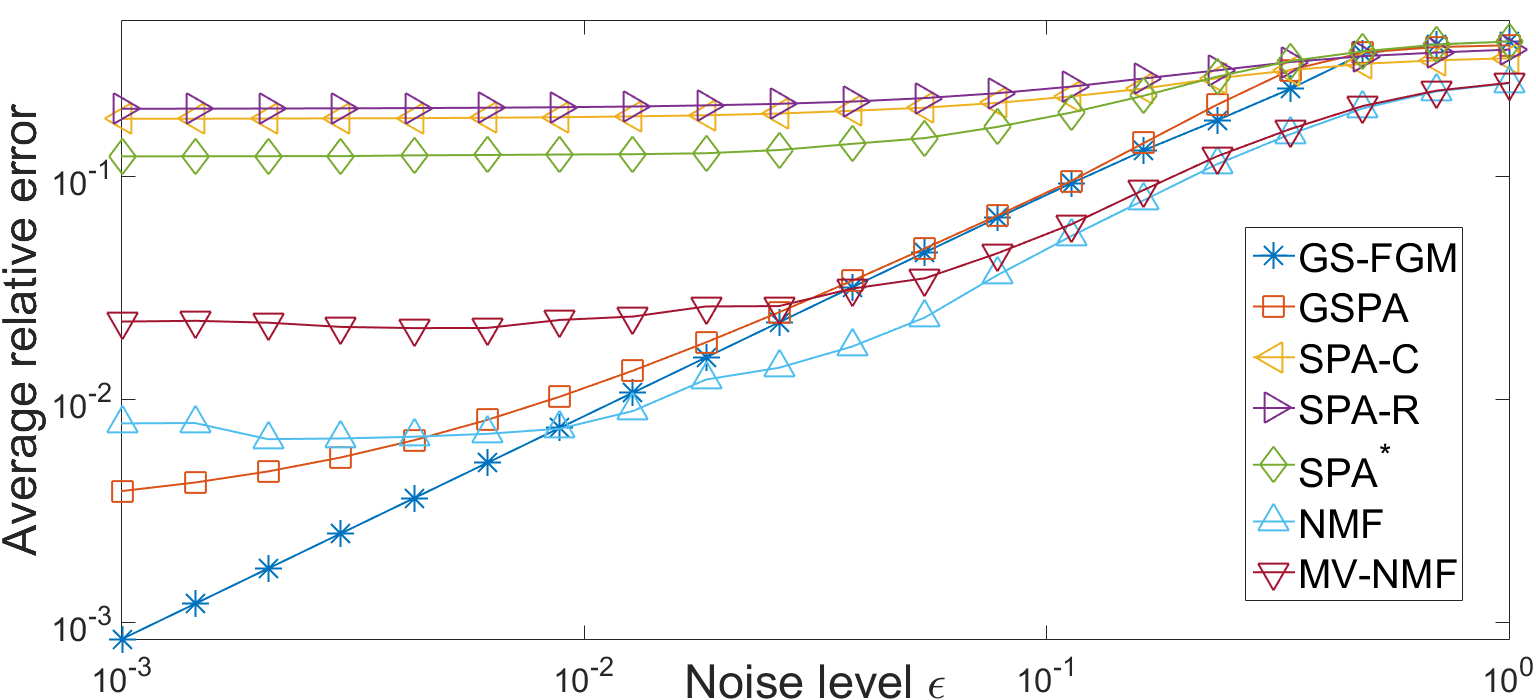}
\caption{Average relative approximation error~\eqref{error} for the different algorithms on the middle-point GS matrices with adversarial noise. \label{res-mid}}
\end{figure}
\begin{figure}
\includegraphics[width=1\textwidth]{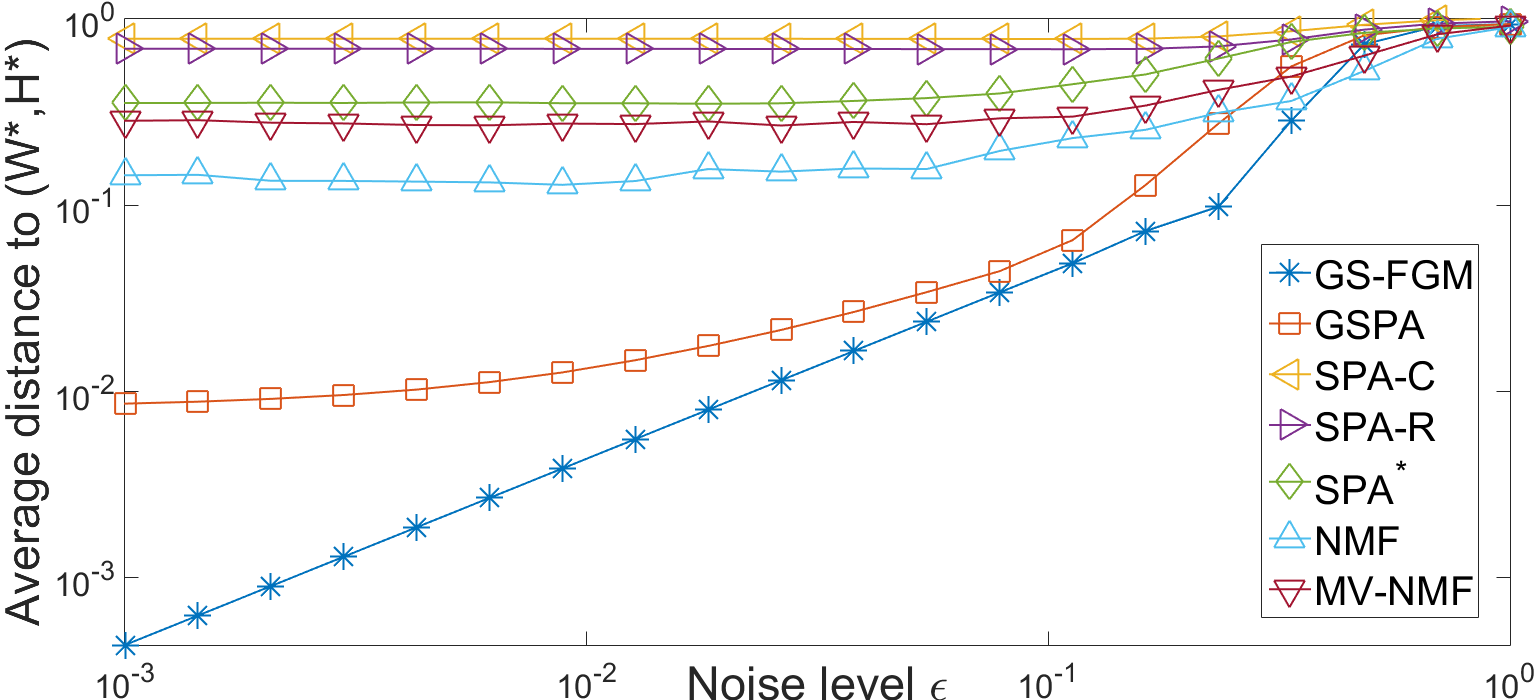}
\caption{Distance to ground truth~\eqref{distgroundtruth} on the middle-point GS matrices with adversarial noise. \label{dist-mid}}
\end{figure}

We observe the following:
 \begin{itemize}
 \item In terms of accuracy, the observations are similar than for the fully random synthetic data sets.  SPA-R and SPA-C are naturally not able to have a good accuracy. Note however that SPA-R performs better than SPA-C because there are more separable rows (12) than columns (10). Moreover, GS-FGM is the only algorithm able to recover the column and row indices perfectly for $\epsilon \leq 0.113$. GSPA performs almost as well but cannot extract all indices (see the zoomed-in graph on Figure~\ref{acc-mid}). SPA$^*$ performs in between.

 \item In terms of approximation error, the behavior is rather interesting: GS-FGM outperforms NMF for low-noise levels ($\epsilon \leq 0.01$) while, for larger noise levels, NMF (and to a lesser extent MV-NMF) performs better.
The reason of the worse performance of NMF is that the problem is more complicated and the NMF algorithm gets stuck in bad local minima. This is a rather interesting observation: \emph{using the GS prior, one can identify better solutions than standard NMF.}

 \item In terms of distance to the ground truth, we oberve a similar behavior as for the fully random synthetic data sets except that NMF and MV-NMF perform even worse because of the more complicated structure of the data.

\end{itemize}

This second experiment shows the superiority of GS-NMF compared to NMF and separable NMF: GS-NMF allows to identify the true underlying factors, leading to low approximation errors.
Among GS-NMF algorithms (namely, GS-FGM, GSPA and SPA*),
GS-FGM performs best producing solutions with higher accuracy, lower approximation error and better identified factors. The second best is GSPA.

\subsection{Document data sets} \label{sec:doc}

In this section, we compare the different algorithms on documents data sets. We use the TDT30 data set~\cite{cai2008modeling},
and the 14 data sets from~\cite{zhong2005generative}.
Note that document data sets are sparse hence are not necessarily scalable hence we did not scale the input matrix.

For GS-FGM, we try 10 different values of $\tilde{\lambda}$ chosen in $[10^{-3},10]$ with 10 log-spaced values (in Matlab, \texttt{logspace(-3,1,10)}), and keep the solution with the highest approximation quality.
The approximation quality is defined as one minus the relative approximation error~\eqref{error}; hence the higher the better.
As opposed to the synthetic data sets,
the numbers $r_1$ and $r_2$ are unknown.
 To evaluate $(r_1,r_2)$ when using GS-FGM,
 we use the strategy described in Section~\ref{sec:fgm} for real data sets.

 \noindent \textbf{Subsampling.} For the document data sets, the size of input data matrix can be very large (the number of words is typically of the order of $10^4$).
 It is impractical to apply GS-FGM such data sets since GS-FGM runs in $\mathcal{O}(mn^2 + nm^2)$ operations.
Similarly as done in~\cite{gillis2018fast}, we preselect a subset of columns and rows of the input matrix. To do so, we adopt the hierarchical clustering from~\cite{gillis2015hierarchical},
running on average in $\mathcal{O}(mn \log_2C)$, where $C$ is the number of the clusters to generate.
For tr11 and tr23 data sets, since the number of documents is relatively small (414 for tr11, 204 for tr23), we keep all the documents and extract 500 words.
For Newsgroups 20, which is a very large data set, we only consider the first 10 classes and refer to the corresponding data set as NG10.
For the other data sets, we extract 500 documents and 500 words, and consider a submatrix matrix $M_s \in \mathbb{R}^{500\times 500}$. However, we take into account the importance of each selected column and row by identifying the the number of data points attached to it (this is given by the hierarchical clustering). To do so, we scale it using the square root of the number of points belonging to its cluster.

 Finally, each algorithm will identify a subset of $r_1$ columns and $r_2$ rows of the subsampled matrix. From these subsets,
 we identify the corresponding columns and rows of the original matrix, and Table~\ref{docus} reports the approximation quality~\eqref{error} of the different algorithms. It also reports the approximation quality of the rank-$r$ truncated SVD, that is,
 $1-\frac{||M-M_r||_F}{||M||_F}$ where $M_r$ is the best rank-$r$ approximation of $M$, to serve as a reference.
 We also only run the separable NMF variants extracting $r_1$ columns and $r_2$ rows using the values of $(r_1,r_2)$ identified by GS-FGM.


\begin{table*}[h]
  \centering
  \resizebox{\textwidth}{30mm}{
  \begin{tabular}{|c|c|cc|ccc|cc||cc|c|}
    \hline
    Dataset&r&$(r_2,r_1)$&GSPA&$(r_2,r_1)$&GS-FGM&SPA*&SPA-C&SPA-R&NMF&MV-NMF&SVD\\
    \hline
    NG10&10&(7,3)&\underline{91.61}&(8,2)&\textbf{91.64}&91.35&91.44&91.49&92.41 &92.37&92.46\\

   TDT30&30&(7,23)&14.13&(4,26)&\underline{14.38}&14.03&\textbf{14.47}&11.30&17.69&17.49& 18.48\\

   classic&4&(4,0)&\textbf{3.58}&(4,0)& \textbf{3.58}&\textbf{3.58}&\textbf{3.58}&1.48&5.12 &3.40&5.20\\
   reviews&5&(0,5)&\textbf{8.39}&(0,5)&\textbf{8.39}&\textbf{8.39}&\textbf{8.39}&7.69& 13.25& 13.06& 13.48\\
   sports&7&(0,7)& \underline{10.49}&(1,6)&\textbf{10.65}&\textbf{10.65}&\underline{10.49}&5.98&13.36  &13.21&13.76\\
    ohscal&10&(0,10)& \textbf{10.27}&(0,10)&\textbf{ 10.27}&\textbf{ 10.27}&\textbf{10.27}&7.03&11.23 &11.14&11.49  \\
    k1b&6& (1,5)&5.76&(1,5)&\textbf{7.07}&5.62&\underline{5.78}&4.54&9.42 &9.16 & 9.62\\
    la12&6&(0,6)&\textbf{4.79}&(0,6)&\textbf{4.79}&\textbf{4.79}&\textbf{4.79}&3.02&7.52 & 5.96&7.78\\
    hitech&6&(3,3)&\textbf{6.43}&(3,3)&\textbf{6.43}&4.50&5.77&4.86&8.84 &8.08& 8.99  \\
    la1&6&(1,5)& 5.05&(2,4) &\textbf{5.13}&4.51&\underline{5.11}&3.73&7.83 &6.50&8.03\\
    la2&6&(0,6)&\textbf{5.86}&(0,6)&\textbf{5.86}&\textbf{5.86}&\textbf{5.86}&3.90&8.11 &7.73&8.35 \\
    tr41&10&(5,5)&52.30&(7,3)&\underline{54.90}&53.31&53.12&\textbf{56.03}&57.19&56.66&57.74\\
    tr45&10&(5,5)&69.08&(7,3)&\textbf{71.94}& 68.20&68.37&\underline{69.55}&76.22 &76.18&76.36\\
    tr11&9&(6,3)& 74.21&(7,2)&\underline{74.27}&72.14&72.50&\textbf{74.74}&76.33&76.28&76.44\\
    tr23&6&(1,5)&63.66&(5,1)&\underline{70.70}&68.46&65.04&\textbf{71.32}& 72.73&72.69&72.86\\
    \hline
    time & & &0.091  & &  1.249  &  0.013  & 0.008  &  0.011  &78.44&280.41&  0.030 \\
    \hline
   \end{tabular}}
  \caption{The relative approximation quality in percent for the document data sets. Among GS-NMF and separable NMF algorithms, the highest quality is highlighted in bold, the second highest is underlined. The last line reports the average computational time in seconds for the different algorithms. \label{docus} }
\end{table*}

We observe the following:

\begin{itemize}
  
\item  GS-FGM and GSPA provide the same solutions in 6 out of the 15 cases. In 5 out of these 6 cases, SPA* provide the same solution.
 
\item  As opposed to the synthetic data sets, SPA-C and SPA-R sometimes perform best, although never significantly better than GS-FGM.
 
\item  GS-FGM performs on average the best, having in all cases the highest or second highest relative approximation quality.

\end{itemize}
  NMF and MV-NMF provide solution with lower approximation error. This is expected since GS-NMF is much more constrained than these NMF variants while the data set is far from being a GS matrix. In fact, we observe that these data sets are not even close to being low rank; see the last column of Table~~\ref{docus} where the relative approximation quality of the truncated SVD is below 10\% for many data sets.
However, it makes sense to perform low-rank approximations to extract meaningful patterns in these documents. In particular, GS-NMF provides subsets of important words and documents; see Table~\ref{docident} for an example.
This illustrates the advantage of interpretability of GS-NMF compared to standard NMF approaches.

The last line of Table~\ref{docus} reports the average computational time in seconds for the different algorithms.
As expected, GS-FGM is slower but the computational time is reasonable for such matrices (below 2.5 seconds in all cases, with an average of 1.25 seconds). Note that NMF and MV-NMF are slower because they are applied directly to the full data sets.

\begin{table}[h]
  \centering
  \begin{tabular}{|l|}
    \hline
    Documents (4) \\ \hline
 \`{}\`{}India won't hesitate to deploy nuclear weapons, premier indicates'' \\
 \`{}\`{}For media, unsavory story tests ideals and stretches limits'' \\
 \`{}\`{}Algeria rebuffs European concerns on reported atrocities''\\
 \`{}\`{}Cohen promises `significant' military campaign against Iraq''  \\  \hline  \hline
Words (26) \\ \hline
hindu,
economic,
pakistan,
iraq,
spkr,
pope,
tobacco,
starr,
suharto, \\
kaczynski,
percent,
white,
school,
jones,
correspondent,
clinton, \\
companies,
jordan,
american,
winter,
lewinsky,
oil,
hong,
hockey,  \\
annan,
president \\
    \hline
   \end{tabular}
  \caption{Words and documents extracted by GS-FGM on the TDT30 data set. \label{docident} }
\end{table}

\subsection{Facial image data sets} \label{sec:images}

In this section, the algorithms are applied on facial image data sets.
In this context, GS-NMF will identify important subjects and important pixels that allow to reconstruct as best as possible the original images.
We use the following facial image data sets:
%

\begin{itemize}
 \item  The CBCL data set is a public database for research usage provided by the MIT center for Biological and Computation Learning. It consists 2429 face images of size $19\times 19$ so that the input pixel-by-face matrix has dimension $361\times 2429$. We set $r=49$ as in~\cite{lee1999learning}.

  \item  The Frey data set is collected by Brendan Frey. It contains 1965 images of Brendan's face and the size of each image is $20\times 28$ so that the input pixel-by-face matrix has dimension $560 \times 1965$. We set $r=50$.

 \item   The Yale data set contains 38 individuals, each of which as 64 frontal face images under different lighting conditions. The images are size of $192\times 168$ which is too large for our purpose (see the discussion in the previous section) hence all the images are downsampled to have size $48 \times 42$. We also select 10 face images from each individual randomly and obtain 380 images. Finally, the pixel-by-face matrix has dimension $2016\times 380$. We set $r=n/10=38$.

 \item  The ORL data set contains a set of faces taken between April 1992 and April 1994 at the Olivetti Research Laboratory in Cambridge, UK. There are ten different images of each of the 40 distinct subjects, each image is size of $112 \times 92$. We subsample each image to obtain images of size $23\times 19$. The pixel-by-face matrix has dimension $437\times 400$. We set $r=n/10=40$.
\end{itemize}

Note that the factorization ranks were chosen rather arbitrarily; we refer the reader to~\cite{tan2012automatic} for a discussion on the choice of $r$.
We use the same strategy to tune $\tilde{\lambda}$ in GS-FGM  as for document data sets. To give each facial image the same importance, we scale them so that their $\ell_1$ norm is equal to one.
The relative approximation quality of the factorizations provided by the different algorithms are reported in Table~\ref{image}.

\begin{table*}[h]
  \centering
  \resizebox{\textwidth}{8mm}{
  \begin{tabular}{|c|c|cc|ccc|cc||cc|c|}
    \hline
    Dataset&r&$(r_2,r_1)$&GSPA&$(r_2,r_1)$&GS-FGM&SPA*&SPA-C&SPA-R&NMF&MV-NMF&SVD\\
    \hline
    CBCL&49 &(1,48) & 80.73 &(14,35)&\underline{83.10}&82.29&79.44&\textbf{84.57} &90.51  & 90.50&91.40 \\
   Frey&50&(24,26) &82.46 &(39,11)&\textbf{83.89}& 83.43&80.61&\underline{83.78}&90.40 &90.41&91.51 \\
   Yale &38&(13,25) &57.52 &(24,14)&\textbf{68.26}& 61.10&60.24&\underline{62.94}& 76.94 &76.85&79.23   \\
 ORL&40&(20,20) &81.38 &(28,12)&\underline{82.54}&82.32&82.23&\textbf{83.26}&  89.47&89.49& 90.24 \\
    \hline
     \end{tabular}}
  \caption{The relative approximation quality in percent for the  facial image data sets. Among GS-NMF and separable NMF algorithms, the highest quality is highlighted in bold, the second highest is underlined. }\label{image}
\end{table*}

For these data sets, GS-FGM outperforms GSPA.
However, SPA-R works very well, slightly better than GS-FGM on CBCL and ORL databases and worse on the Frey and Yale data sets.
The reason is that extracting representative faces within a set of images is not always very appropriate because of the nonnegativity constraints. In some sense, the GS-NMF model is not ideal in this situation, but it is still able to provide meaningful results; for example, for the Yale data sets, it provides significantly lower approximation error than all other algorithms.
Here the non-uniqueness issue plays a role. For example, on the Frey data set, we see that using a (0,49)-separable approximation (SPA-R) leads to an error very close to a $(35,14)$-separable approximation (GS-FGM).

Similarly as for the document data sets, NMF and MV-NMF provide solution with lower approximation error.


Figures~\ref{cbcl:ex}, \ref{frey:ex}, \ref{yale:ex} and
\ref{orl:ex}
provide a visual representations of the solutions generated by GS-FGM: for each data set, they display the positions of the selected pixels and the selected representative faces.
It is interesting to observe the location of the selected pixels: they are either located on the edge (where pixels behave rather differently, not being part of the faces) or are well spread around the center of the face.
The selected faces represent rather different faces from the data sets.
For the CBCL and ORL data sets, the selected faces either come from different persons that look rather different,
or of the same person in very different positions or with different illuminations
(Figures~\ref{cbcl:ex} and~\ref{orl:ex}).
For the Frey data sets, the selected faces represent different emotions (Figure~\ref{frey:ex}).
For the Yale data sets, the selected faces represent different persons and illuminations (Figure~\ref{yale:ex}).

 Figures~\ref{cbcl:rec},
\ref{frey:rec},
 \ref{yale:rec},
 and \ref{orl:rec} display some sample images from the different data sets and their reconstruction using GS-FGM.

\begin{figure}[H]
\includegraphics[width=\textwidth]{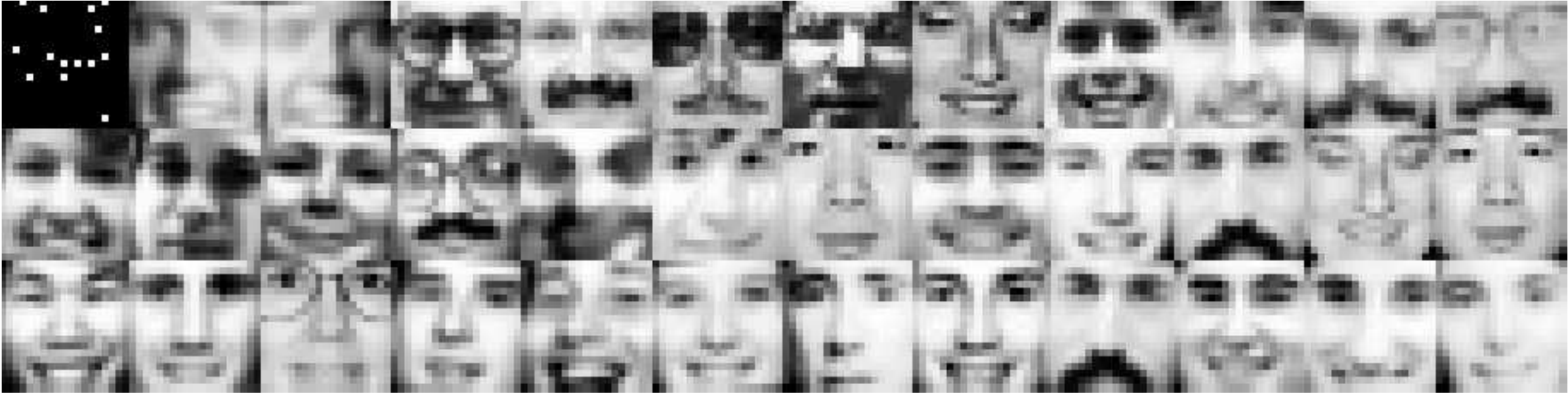} 
 \caption{
 The first image highlights the 14 extracted pixels by GS-FGM for the CBCL data set.
 The next images are the 35 subjects extracted by GS-FGM. \label{cbcl:ex}
 }
\end{figure}


\begin{figure}[H]
\begin{center}
\includegraphics[width=0.75\textwidth]{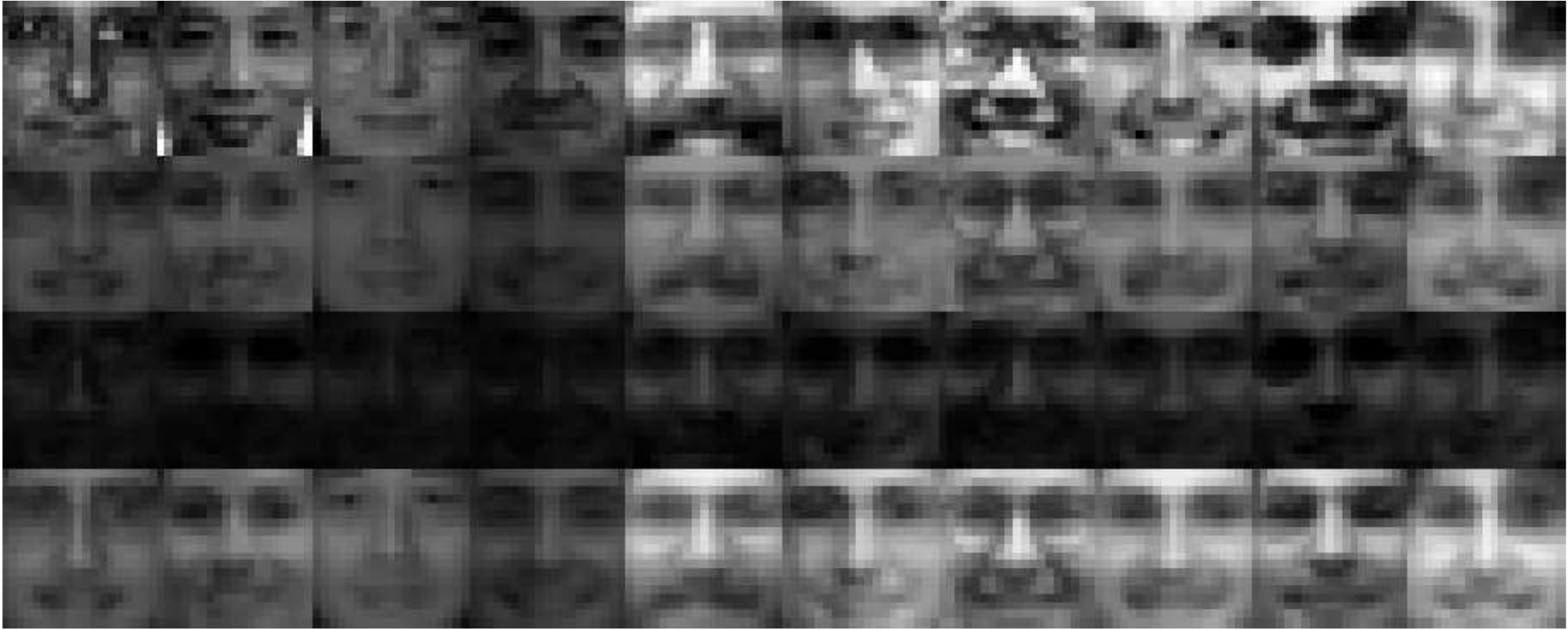}
 \caption{
 The first row displays some images of the CBCL data set,
 the second row displays their approximation by the separable part $M(:,\mathcal{K}_1)P_1$ using the extracted faces,
 the third row displays their approximation by the separable part $P_2 M(\mathcal{K}_2,:)$ using the extracted pixels,
 the last row is the GS approximation $M(:,\mathcal{K}_1)P_1 + P_2 M(\mathcal{K}_2,:)$. \label{cbcl:rec}}
 \end{center}
\end{figure}

\begin{figure}[H]
\includegraphics[width=\textwidth]{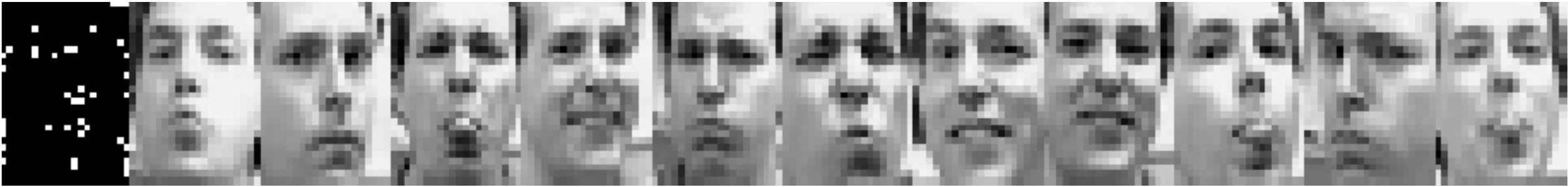}
 \caption{ The first image highlights the 39 extracted pixels by GS-FGM for the Frey data set.
 The next images are the 11 subjects extracted by GS-FGM. \label{frey:ex}}
\end{figure}

\begin{figure}[H]
\begin{center}
\includegraphics[width=0.8\textwidth]{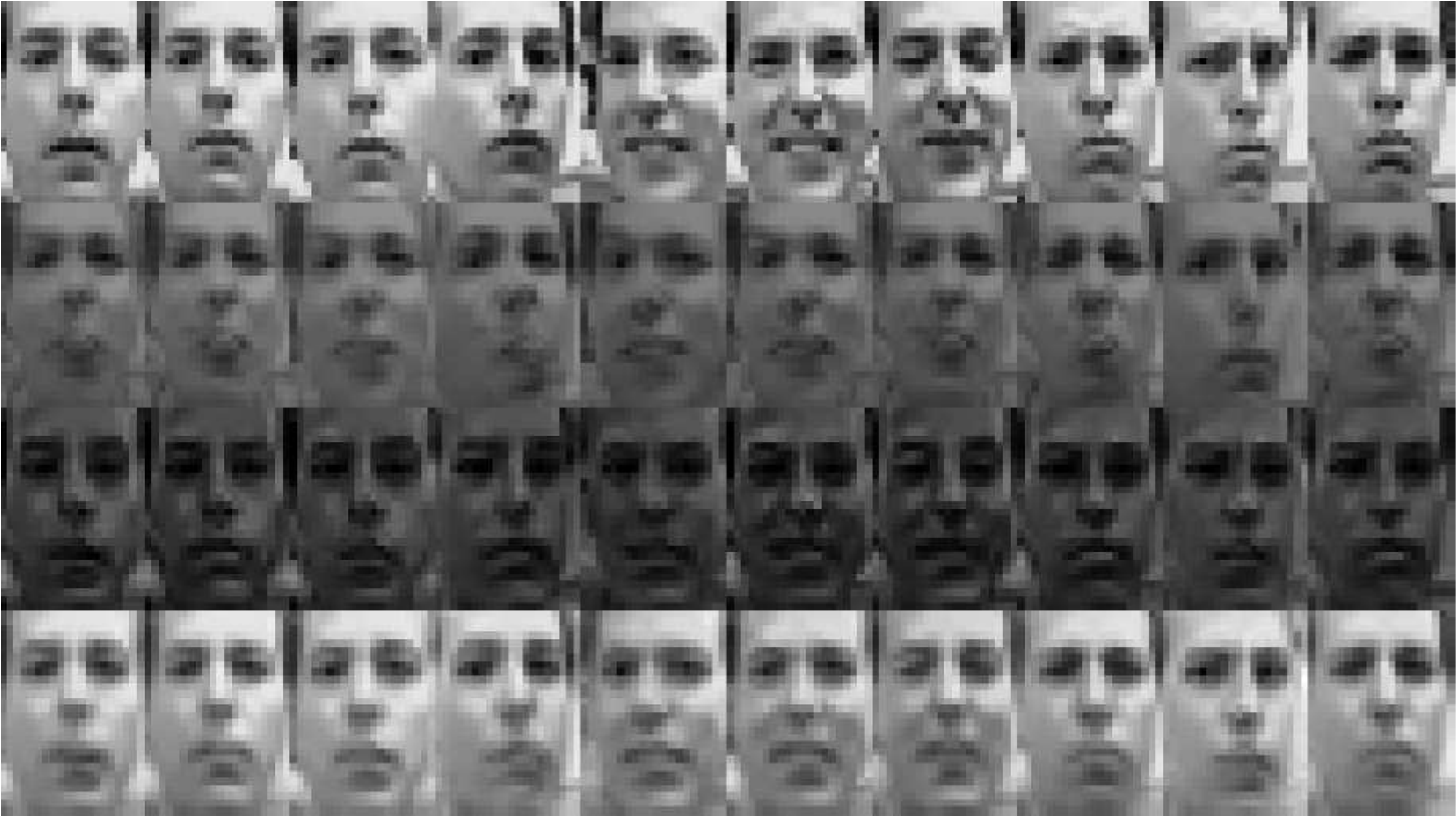}
 \caption{ The first row displays some images of the Frey data set,
 the second row displays their approximation by the separable part $M(:,\mathcal{K}_1)P_1$ using the extracted faces,
 the third row displays their approximation by the separable part $P_2 M(\mathcal{K}_2,:)$ using the extracted pixels,
 the last row is the GS approximation $M(:,\mathcal{K}_1)P_1 + P_2 M(\mathcal{K}_2,:)$. \label{frey:rec}}
 \end{center}
\end{figure}

\begin{figure}[H]
\begin{center}
\includegraphics[width=0.8\textwidth]{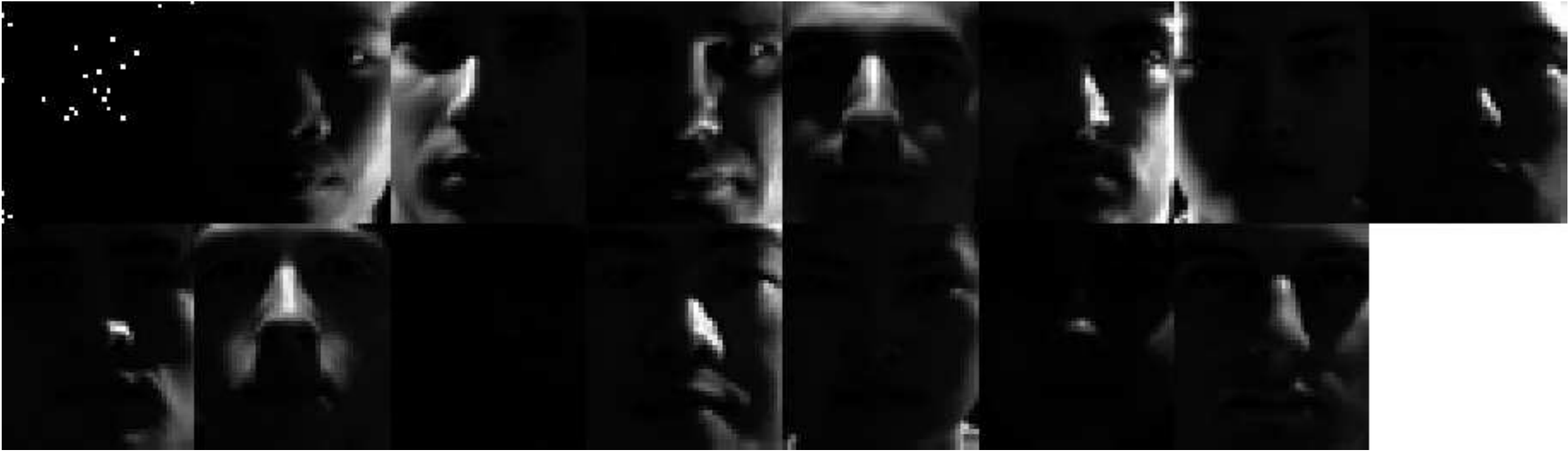}
 \caption{The first image highlights the 24 extracted pixels by GS-FGM for the Yale data set.
 The next images are the 14 subjects extracted by GS-FGM. \label{yale:ex}}
 \end{center}
\end{figure}

\begin{figure}[H]
\includegraphics[width=\textwidth]{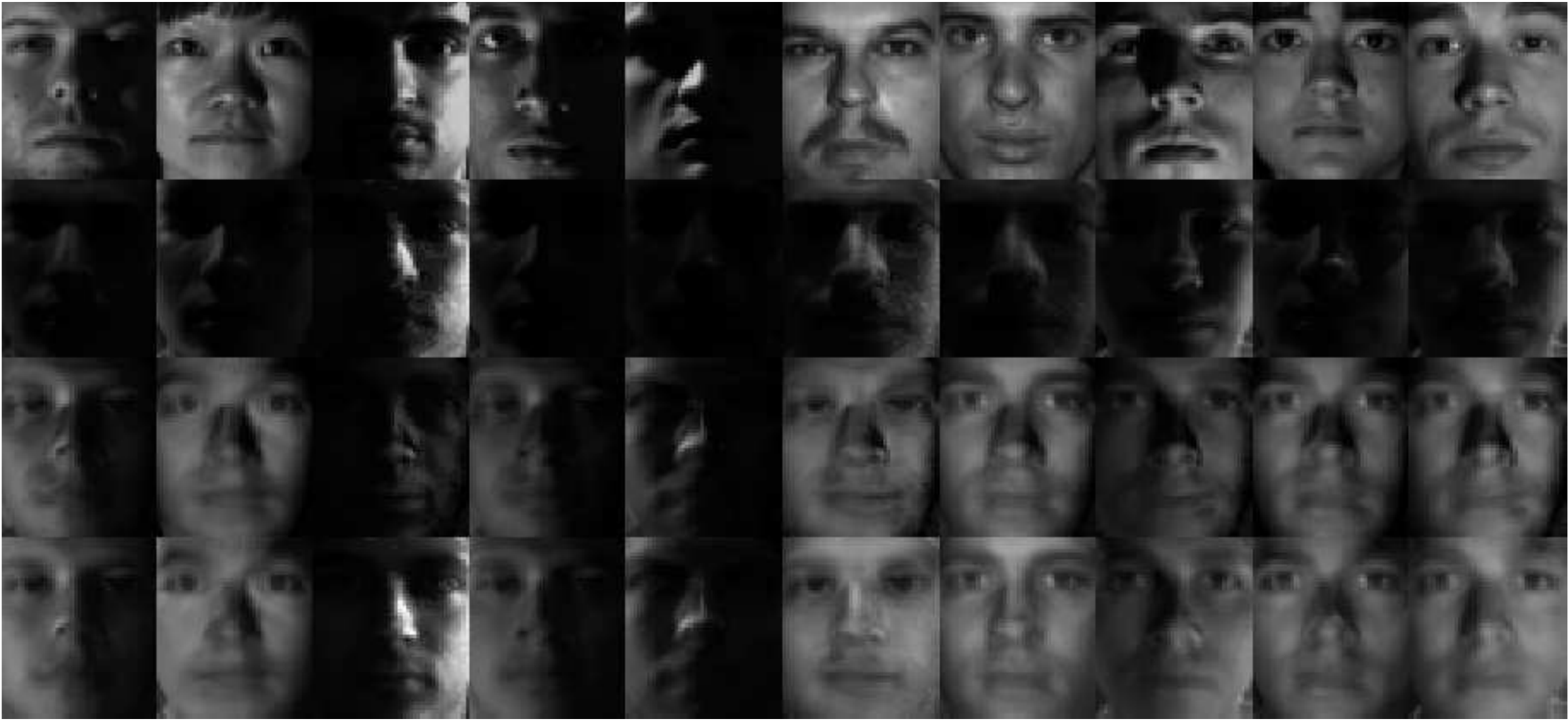}
 \caption{ The first row displays some images of the Yale  data set,
 the second row displays their approximation by the separable part $M(:,\mathcal{K}_1)P_1$ using the extracted faces,
 the third row displays their approximation by the separable part $P_2 M(\mathcal{K}_2,:)$ using the extracted pixels,
 the last row is the GS approximation $M(:,\mathcal{K}_1)P_1 + P_2 M(\mathcal{K}_2,:)$. \label{yale:rec}}
\end{figure}

\begin{figure}[H]
\includegraphics[width=\textwidth]{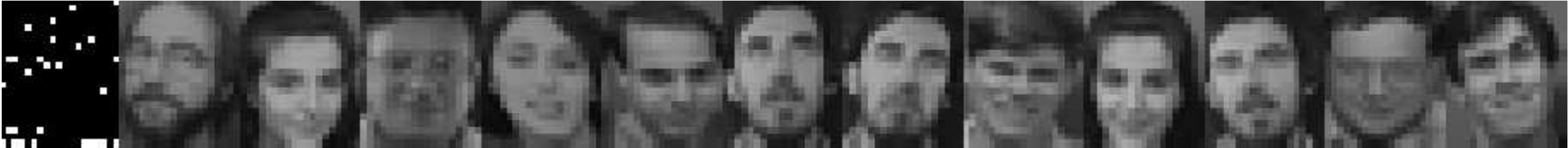}
 \caption{The first image highlights the 28 extracted pixels by GS-FGM for the ORL data set.
 The next images are the 12 subjects extracted by GS-FGM. \label{orl:ex}}
\end{figure}

\begin{figure}[H]
\begin{center}
\includegraphics[width=0.7\textwidth]{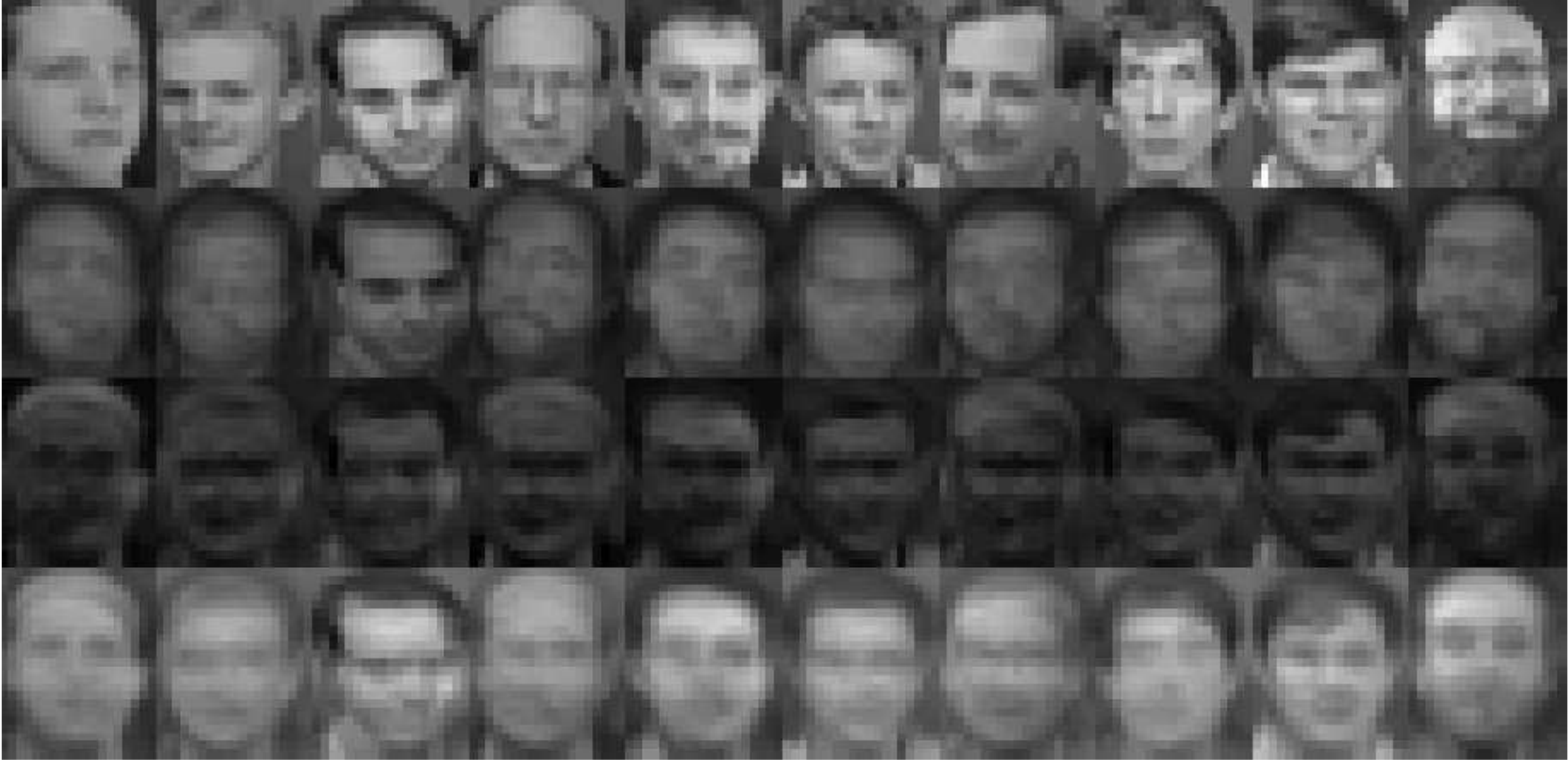}
 \caption{ The first row displays some images of the ORL data set,
 the second row displays their approximation by the separable part $M(:,\mathcal{K}_1)P_1$ using the extracted faces,
 the third row displays their approximation by the separable part $P_2 M(\mathcal{K}_2,:)$ using the extracted pixels,
 the last row is the GS approximation $M(:,\mathcal{K}_1)P_1 + P_2 M(\mathcal{K}_2,:)$. \label{orl:rec}}
 \end{center}
\end{figure}

Table~\ref{timeofimage} reports the computational time for the different algorithms.
As expected, GS-FGM is slower; in particular for the largest data set, namely the Yale data set ($2016 \times 380$), where GS-FGM requires 35 seconds.

\begin{table*}[h]
  \centering
  \resizebox{\textwidth}{8mm}{
  \begin{tabular}{|c|c|cc|ccc|cc||cc|c|}
    \hline
    Dataset&r&$(r_2,r_1)$&GSPA&$(r_2,r_1)$&GS-FGM&SPA*&SPA-C&SPA-R&NMF&MV-NMF&SVD\\
    \hline
    CBCL&49 &(1,48) &1.612 &(14,35)&5.076&0.075&0.070&0.128&17.52  &307.63&0.605 \\
   Frey&50&(24,26) &2.616&(39,11)&5.387&0.096& 0.070&0.090 &  23.94 &392.83&0.603\\
   Yale &38&(13,25) &1.387&(24,14)&35.746&0.053&  0.048&0.044& 12.10  &422.49&0.318 \\
 ORL&40&(20,20) & 0.470&(28,12)&0.905&0.017&  0.013&0.013 &4.62  &159.73&0.051  \\
    \hline
     \end{tabular}}
  \caption{Computational time in seconds for the different algorithms on the image data sets. \label{timeofimage}}
\end{table*}

\subsection{Take-home messages from the numerical experiments}

In terms of approximation error, GS-NMF provides in general results that are better than separable NMF algorithms.
For synthetic data sets, where the input data is close to being a GS matrix, GS-NMF competes favourably with NMF and MV-NMF. In particular, it is able to recover the ground truth factors while standard NMF algorithms fail to do so.
Moreover, for more complicated data sets (see Section~\ref{sec:advnoise}), GS-NMF can even produce solutions with much lower approximation error than NMF whose solutions are stuck at bad local minima.
For real data sets, GS-NMF produces solutions with higher approximation error, because of the strong model assumptions. However, it has the advantage to produce highly
interpretable solutions. The improved interpretability was exemplified on a
document data (see Table~\ref{docident}), and on facial images where GS-NMF identified important pixels and subjects in a set of facial images (see
Figures~\ref{cbcl:ex},
\ref{frey:ex}, \ref{yale:ex} and~\ref{orl:ex}),
which is not possible with any other current NMF algorithm.


\section{Conclusion} \label{sec:concl}

In this paper, we have generalized separable NMF: instead of only selecting columns of the input matrix to approximate it, we allow for columns and rows to be selected. We refer to this problem as generalized separable NMF (GS-NMF).
We studied some interesting properties of matrices that can be decomposed using GS-NMF; they are referred to as GS matrices.
In particular, we showed that GS-NMF can represent matrices much more compactly than separable NMF.
Then, we proposed a convex optimization model to tackle GS-NMF, and developed a fast gradient method to solve the model.
We also proposed a heuristic algorithm inspired by the successive projection algorithm from the separable NMF literature.
We compared the algorithms on synthetic, document and image data sets and showed that they are able, in most cases, to generate decompositions with smaller approximation error than separable NMF algorithms.

Compared to standard NMF algorithms, GS-NMF provides decompositions with higher approximation errors (because of the additional constraints in the decomposition) but provides meaningful and easily interpretable factors.
For example, for facial images, GS-NMF identifies important pixels and subjects in a data set.
Moreover, for synthetic data sets, GS-NMF was able to recover the ground truth factors, sometimes leading to much lower approximation error than NMF algorithms.

Further work include to deepen our understanding of GS matrices. This would hopefully allow for example to design more efficient algorithms that provably recover optimal decompositions under suitable conditions (e.g., uniqueness) and in the presence of noise; as done for separable NMF algorithms.


\small

\bibliographystyle{abbrv}
\bibliography{nmfref}

\begin{thebibliography}{10}

\bibitem{araujo2001successive}
M.~C.~U. Ara{\'u}jo, T.~C.~B. Saldanha, R.~K.~H. Galvao, T.~Yoneyama, H.~C.
  Chame, and V.~Visani.
\newblock The successive projections algorithm for variable selection in
  spectroscopic multicomponent analysis.
\newblock {\em Chemometrics and Intelligent Laboratory Systems}, 57(2):65--73,
  2001.

\bibitem{arora2013practical}
S.~Arora, R.~Ge, Y.~Halpern, D.~Mimno, A.~Moitra, D.~Sontag, Y.~Wu, and M.~Zhu.
\newblock A practical algorithm for topic modeling with provable guarantees.
\newblock In {\em International Conference on Machine Learning}, pages
  280--288, 2013.

\bibitem{arora2012computing}
S.~Arora, R.~Ge, R.~Kannan, and A.~Moitra.
\newblock Computing a nonnegative matrix factorization--provably.
\newblock In {\em Proceedings of the forty-fourth annual ACM symposium on
  Theory of computing}, pages 145--162. ACM, 2012.

\bibitem{arora2016computing}
S.~Arora, R.~Ge, R.~Kannan, and A.~Moitra.
\newblock Computing a nonnegative matrix factorization---provably.
\newblock {\em SIAM Journal on Computing}, 45(4):1582--1611, 2016.

\bibitem{arora2012learning}
S.~Arora, R.~Ge, and A.~Moitra.
\newblock Learning topic models--going beyond svd.
\newblock In {\em Foundations of Computer Science (FOCS), 2012 IEEE 53rd Annual
  Symposium on}, pages 1--10. IEEE, 2012.

\bibitem{cai2008modeling}
D.~Cai, Q.~Mei, J.~Han, and C.~Zhai.
\newblock Modeling hidden topics on document manifold.
\newblock In {\em Proceedings of the 17th ACM conference on Information and
  knowledge management}, pages 911--920. ACM, 2008.

\bibitem{chan2011simplex}
T.-H. Chan, W.-K. Ma, A.~Ambikapathi, and C.-Y. Chi.
\newblock A simplex volume maximization framework for hyperspectral endmember
  extraction.
\newblock {\em IEEE Transactions on Geoscience and Remote Sensing},
  49(11):4177--4193, 2011.

\bibitem{chan2008convex}
T.-H. Chan, W.-K. Ma, C.-Y. Chi, and Y.~Wang.
\newblock A convex analysis framework for blind separation of non-negative
  sources.
\newblock {\em IEEE Transactions on Signal Processing}, 56(10):5120--5134,
  2008.

\bibitem{dikmen2015learning}
O.~Dikmen, Z.~Yang, and E.~Oja.
\newblock Learning the information divergence.
\newblock {\em IEEE Transactions on Pattern Analysis and Machine Intelligence},
  37(7):1442--1454, 2015.

\bibitem{elhamifar2012see}
E.~Elhamifar, G.~Sapiro, and R.~Vidal.
\newblock See all by looking at a few: Sparse modeling for finding
  representative objects.
\newblock In {\em 2012 IEEE Conference on Computer Vision and Pattern
  Recognition}, pages 1600--1607. IEEE, 2012.

\bibitem{esser2012convex}
E.~Esser, M.~Moller, S.~Osher, G.~Sapiro, and J.~Xin.
\newblock A convex model for nonnegative matrix factorization and
  dimensionality reduction on physical space.
\newblock {\em IEEE Transactions on Image Processing}, 21(7):3239--3252, 2012.

\bibitem{fevotte2009nonnegative}
C.~F{\'e}votte, N.~Bertin, and J.-L. Durrieu.
\newblock Nonnegative matrix factorization with the itakura-saito divergence:
  With application to music analysis.
\newblock {\em Neural computation}, 21(3):793--830, 2009.

\bibitem{fu2018identifiability}
X.~Fu, K.~Huang, and N.~D. Sidiropoulos.
\newblock On identifiability of nonnegative matrix factorization.
\newblock {\em IEEE Signal Processing Letters}, 25(3):328--332, 2018.

\bibitem{xiao2019uniq}
X.~Fu, K.~Huang, N.~D. Sidiropoulos, and W.-K. Ma.
\newblock Nonnegative matrix factorization for signal and data analytics:
  Identifiability, algorithms, and applications.
\newblock {\em IEEE Signal Processing Magazine}, 36(2):59--80, 2019.

\bibitem{fu2016robust}
X.~Fu, K.~Huang, B.~Yang, W.-K. Ma, and N.~D. Sidiropoulos.
\newblock Robust volume minimization-based matrix factorization for remote
  sensing and document clustering.
\newblock {\em IEEE Transactions on Signal Processing}, 64(23):6254--6268,
  2016.

\bibitem{gillis2013robustness}
N.~Gillis.
\newblock Robustness analysis of hottopixx, a linear programming model for
  factoring nonnegative matrices.
\newblock {\em SIAM Journal on Matrix Analysis and Applications},
  34(3):1189--1212, 2013.

\bibitem{gillis2014successive}
N.~Gillis.
\newblock Successive nonnegative projection algorithm for robust nonnegative
  blind source separation.
\newblock {\em SIAM Journal on Imaging Sciences}, 7(2):1420--1450, 2014.

\bibitem{gillis2014and}
N.~Gillis.
\newblock The why and how of nonnegative matrix factorization.
\newblock {\em Regularization, Optimization, Kernels, and Support Vector
  Machines}, 12:257--291, 2014.

\bibitem{gillis2012accelerated}
N.~Gillis and F.~Glineur.
\newblock Accelerated multiplicative updates and hierarchical als algorithms
  for nonnegative matrix factorization.
\newblock {\em Neural computation}, 24(4):1085--1105, 2012.

\bibitem{gillis2015hierarchical}
N.~Gillis, D.~Kuang, and H.~Park.
\newblock Hierarchical clustering of hyperspectral images using rank-two
  nonnegative matrix factorization.
\newblock {\em IEEE Transactions on Geoscience and Remote Sensing},
  53(4):2066--2078, 2015.

\bibitem{gillis2014robust}
N.~Gillis and R.~Luce.
\newblock Robust near-separable nonnegative matrix factorization using linear
  optimization.
\newblock {\em The Journal of Machine Learning Research}, 15(1):1249--1280,
  2014.

\bibitem{gillis2018fast}
N.~Gillis and R.~Luce.
\newblock A fast gradient method for nonnegative sparse regression with self
  dictionary.
\newblock {\em IEEE Transactions on Image Processing}, 27(1):24--37, 2018.

\bibitem{gillis2014fast}
N.~Gillis and S.~A. Vavasis.
\newblock Fast and robust recursive algorithms for separable nonnegative matrix
  factorization.
\newblock {\em IEEE Transactions on Pattern Analysis and Machine Intelligence},
  36(4):698--714, 2014.

\bibitem{goreinov1997pseudo}
S.~A. Goreinov, N.~L. Zamarashkin, and E.~E. Tyrtyshnikov.
\newblock Pseudo-skeleton approximations by matrices of maximal volume.
\newblock {\em Mathematical Notes}, 62(4):515--519, 1997.

\bibitem{hoyer2004non}
P.~O. Hoyer.
\newblock Non-negative matrix factorization with sparseness constraints.
\newblock {\em Journal of machine learning research}, 5(Nov):1457--1469, 2004.

\bibitem{huang2013non}
K.~Huang, N.~D. Sidiropoulos, and A.~Swami.
\newblock Non-negative matrix factorization revisited: Uniqueness and algorithm
  for symmetric decomposition.
\newblock {\em IEEE Transactions on Signal Processing}, 62(1):211--224, 2013.

\bibitem{knight2008sinkhorn}
P.~A. Knight.
\newblock The sinkhorn--knopp algorithm: convergence and applications.
\newblock {\em SIAM Journal on Matrix Analysis and Applications},
  30(1):261--275, 2008.

\bibitem{kumar2013fast}
A.~Kumar, V.~Sindhwani, and P.~Kambadur.
\newblock Fast conical hull algorithms for near-separable non-negative matrix
  factorization.
\newblock In {\em International Conference on Machine Learning}, pages
  231--239, 2013.

\bibitem{lee1999learning}
D.~D. Lee and H.~S. Seung.
\newblock Learning the parts of objects by non-negative matrix factorization.
\newblock {\em Nature}, 401:788--791, 1999.

\bibitem{leplat2019minimum}
V.~Leplat, A.~M. Ang, and N.~Gillis.
\newblock Minimum-volume rank-deficient nonnegative matrix factorizations.
\newblock In {\em IEEE International Conference on Acoustics, Speech and Signal
  Processing (ICASSP)}, pages 3402--3406. IEEE, 2019.

\bibitem{lin2015identifiability}
C.-H. Lin, W.-K. Ma, W.-C. Li, C.-Y. Chi, and A.~Ambikapathi.
\newblock Identifiability of the simplex volume minimization criterion for
  blind hyperspectral unmixing: The no-pure-pixel case.
\newblock {\em IEEE Transactions on Geoscience and Remote Sensing},
  53(10):5530--5546, 2015.

\bibitem{liu2011latent}
G.~Liu and S.~Yan.
\newblock Latent low-rank representation for subspace segmentation and feature
  extraction.
\newblock In {\em 2011 International Conference on Computer Vision}, pages
  1615--1622. IEEE, 2011.

\bibitem{ma2014signal}
W.-K. Ma, J.~M. Bioucas-Dias, T.-H. Chan, N.~Gillis, P.~Gader, A.~J. Plaza,
  A.~Ambikapathi, and C.-Y. Chi.
\newblock A signal processing perspective on hyperspectral unmixing: Insights
  from remote sensing.
\newblock {\em IEEE Signal Processing Magazine}, 31(1):67--81, 2014.

\bibitem{mahoney2009cur}
M.~W. Mahoney and P.~Drineas.
\newblock Cur matrix decompositions for improved data analysis.
\newblock {\em Proceedings of the National Academy of Sciences},
  106(3):697--702, 2009.

\bibitem{miao2007endmember}
L.~Miao and H.~Qi.
\newblock Endmember extraction from highly mixed data using minimum volume
  constrained nonnegative matrix factorization.
\newblock {\em IEEE Transactions on Geoscience and Remote Sensing},
  45(3):765--777, 2007.

\bibitem{mikhalev2018rectangular}
A.~Mikhalev and I.~V. Oseledets.
\newblock Rectangular maximum-volume submatrices and their applications.
\newblock {\em Linear Algebra and its Applications}, 538:187--211, 2018.

\bibitem{nascimento2005vertex}
J.~M. Nascimento and J.~M. Dias.
\newblock Vertex component analysis: A fast algorithm to unmix hyperspectral
  data.
\newblock {\em IEEE transactions on Geoscience and Remote Sensing},
  43(4):898--910, 2005.

\bibitem{nes83}
Y.~Nesterov.
\newblock A method of solving a convex programming problem with convergence
  rate o(1/k2).
\newblock In {\em Soviet Mathematics Doklady}, volume~27, pages 372--376, 1983.

\bibitem{nes04}
Y.~Nesterov.
\newblock {\em Introductory lectures on convex optimization: A basic course},
  volume~87.
\newblock Springer Science \& Business Media, 2004.

\bibitem{olshen2010successive}
R.~A. Olshen and B.~Rajaratnam.
\newblock Successive normalization of rectangular arrays.
\newblock {\em Annals of statistics}, 38(3):1638, 2010.

\bibitem{recht2010guaranteed}
B.~Recht, M.~Fazel, and P.~A. Parrilo.
\newblock Guaranteed minimum-rank solutions of linear matrix equations via
  nuclear norm minimization.
\newblock {\em SIAM Review}, 52(3):471--501, 2010.

\bibitem{recht2012factoring}
B.~Recht, C.~Re, J.~Tropp, and V.~Bittorf.
\newblock Factoring nonnegative matrices with linear programs.
\newblock In {\em Advances in Neural Information Processing Systems}, pages
  1214--1222, 2012.

\bibitem{ren2003automatic}
H.~Ren and C.-I. Chang.
\newblock Automatic spectral target recognition in hyperspectral imagery.
\newblock {\em IEEE Transactions on Aerospace and Electronic Systems},
  39(4):1232--1249, 2003.

\bibitem{tan2012automatic}
V.~Y. Tan and C.~F{\'e}votte.
\newblock Automatic relevance determination in nonnegative matrix factorization
  with the $\beta$-divergence.
\newblock {\em IEEE Transactions on Pattern Analysis and Machine Intelligence},
  35(7):1592--1605, 2012.

\bibitem{Tho74}
L.~Thomas.
\newblock {Rank factorization of nonnegative matrices}.
\newblock {\em SIAM Review}, 16(3):393--394, 1974.

\bibitem{toh1999sdpt3}
K.-C. Toh, M.~Todd, and R.~T{\"u}t{\"u}nc{\"u}.
\newblock {SDPT3}--a {MATLAB} software package for semidefinite programming,
  version 1.3.
\newblock {\em Optimization Methods and Software}, 11(1-4):545--581, 1999.

\bibitem{vavasis2009complexity}
S.~A. Vavasis.
\newblock On the complexity of nonnegative matrix factorization.
\newblock {\em SIAM Journal on Optimization}, 20(3):1364--1377, 2010.

\bibitem{zhong2005generative}
S.~Zhong and J.~Ghosh.
\newblock Generative model-based document clustering: a comparative study.
\newblock {\em Knowledge and Information Systems}, 8(3):374--384, 2005.

\end{thebibliography}

\end{document}